\def\eqref#1{equation~\ref{#1}}
\def\Eqref#1{Equation (\ref{#1})}
\def\Eqrefs#1#2#3{Equations (\ref{#1}), (\ref{#2}), (\ref{#3})}
\def\1{\bm{1}}
\def\vv{{\bm{v}}}
\DeclareMathAlphabet{\mathsfit}{\encodingdefault}{\sfdefault}{m}{sl}
\SetMathAlphabet{\mathsfit}{bold}{\encodingdefault}{\sfdefault}{bx}{n}
\def\gC{{\mathcal{C}}}
\def\gD{{\mathcal{D}}}
\def\gF{{\mathcal{F}}}
\def\gG{{\mathcal{G}}}
\def\gL{{\mathcal{L}}}
\def\gR{{\mathcal{R}}}
\def\gS{{\mathcal{S}}}
\def\gT{{\mathcal{T}}}
\def\gX{{\mathcal{X}}}
\newcommand{\E}{\mathbb{E}}
\newcommand{\R}{\mathbb{R}}
\DeclareMathOperator*{\argmin}{arg\,min}
\newtheorem{assumption}{Assumption}
\newtheorem{theorem}{Theorem}[section]
\newtheorem{corollary}{Corollary}[theorem]
\newtheorem{lemma}[theorem]{Lemma}
\newtheorem{prop}[theorem]{Proposition}
\newtheorem{definition}[theorem]{Definition}
\newtheorem*{remark}{Remark}
\icmltitlerunning{Contrastive Unsupervised Representation Learning}
\begin{document}

\twocolumn[
\icmltitle{A Theoretical Analysis of Contrastive Unsupervised Representation Learning}



\icmlsetsymbol{equal}{*}

\begin{icmlauthorlist}
\icmlauthor{Sanjeev Arora}{pr,ias}
\icmlauthor{Hrishikesh Khandeparkar}{pr}
\icmlauthor{Mikhail Khodak}{cm} 
\icmlauthor{Orestis Plevrakis}{pr}
\icmlauthor{Nikunj Saunshi}{pr}
\vspace{0.02in}
\end{icmlauthorlist}
\begin{emaillist}
\email{\normalfont\texttt{\{arora,\,hrk,\,orestisp,\,nsaunshi\}@cs.princeton.edu}}
\email{\normalfont\texttt{\qquad khodak@cmu.edu}}
\end{emaillist}

\icmlaffiliation{pr}{Princeton University, Princeton, New Jersey, USA.}
\icmlaffiliation{ias}{Institute for Advanced Study, Princeton, New Jersey, USA.}
\icmlaffiliation{cm}{Carnegie Mellon University, Pittsburgh, Pennsylvania, USA}


\icmlkeywords{Machine Learning, ICML}

\vskip 0.3in
]



\printAffiliationsAndNotice{}  

\begin{abstract}
Recent empirical works have successfully used unlabeled data to learn feature representations that are broadly useful in downstream classification tasks.
Several of these methods are reminiscent of the well-known word2vec embedding algorithm: leveraging availability of pairs of semantically ``similar"  data points and ``negative samples," the learner forces the inner product of representations of similar pairs with each other to be higher on average than with negative samples.
The current paper uses the term {\em contrastive learning} for such algorithms and presents a theoretical framework for analyzing them by introducing {\em latent classes} and hypothesizing that semantically similar points are sampled from the same latent class. 
This framework allows us to show provable guarantees on the performance of the learned representations on the average classification task that is comprised of a subset of the same set of latent classes.
Our generalization bound also shows that learned representations can reduce (labeled) sample complexity on downstream tasks.
We conduct controlled experiments in both the text and image domains to support the theory.
\end{abstract}

\section{Introduction}\label{sec:introduction}
This paper concerns {\em unsupervised representation learning}: using unlabeled data to learn a representation function $f$ such that replacing data point $x$ by feature vector $f(x)$ in new classification tasks reduces the requirement for labeled data.
This is distinct from {\em semi-supervised learning}, where learning can leverage unlabeled as well as labeled data. (Section \ref{sec:related} surveys other prior ideas and models).

For images, a  {\em proof of existence} for broadly useful representations is the output of the penultimate layer (the one before the softmax) of a powerful deep net trained on ImageNet. 
In natural language processing (NLP), low-dimensional representations of text -- called {\em text embeddings} -- have been computed with unlabeled data \cite{Peters:18,Devlin:18}.
Often the embedding function is trained by using the embedding of a piece of text to predict the surrounding text \cite{Kiros:15,Logeswaran:18,Pagliardini:18}.
Similar methods that leverage similarity in nearby frames in a video clip have had some success for images as well \cite{Wang:15}.

Many of these algorithms are related: they assume access to pairs or tuples (in the form of co-occurrences) of text/images that are more {\em semantically similar} than randomly sampled text/images, and their objective forces representations to respect this similarity on average. 
For instance, in order to learn a representation function $f$ for sentences, a simplified version of what \citet{Logeswaran:18} minimize is the following loss function
\begin{equation*}\label{eqn:w2v1}
\mathop{\E}\limits_{x, x^+, x^{-}}\left[-\log\left(\frac{e^{f\left(x\right)^Tf\left(x^+\right)}}{e^{f\left(x\right)^Tf\left(x^+\right)}+e^{f\left(x\right)^Tf\left(x^-\right)}}\right)\right]
\end{equation*} 
where $(x, x^{+})$ are a similar pair and $x^{-}$ is presumably dissimilar to $x$ (often chosen to be a random point) and typically referred to as a {\em negative sample}.
Though reminiscent of past ideas -- e.g. kernel learning, metric learning, co-training \cite{Cortes:10,Bellet:13,Blum:98} -- these algorithms lack a theoretical framework {\em quantifying} when and why they work.
While it seems intuitive that minimizing such loss functions should lead to representations that capture `similarity,' formally it is unclear 
why the learned representations should do well on downstream {\em linear classification tasks} -- their somewhat mysterious success is often treated as an obvious consequence. 
To analyze this success, a framework must connect  `similarity' in unlabeled data with the semantic information that is implicitly present in downstream tasks. 

We propose the term {\em Contrastive Learning} for such methods and provide a new conceptual framework with minimal assumptions\footnote{The alternative would be to make assumptions about generative models of data.
This is difficult for images and text. 
}.
Our main contributions are the following:
\begin{enumerate}
\item We formalize the notion of semantic similarity by introducing {\em latent classes}. Similar pairs are assumed to be drawn from the same latent class.
A downstream task is comprised of a subset of these latent classes.
\item Under this formalization, we prove that a representation function $f$ learned from a function class $\gF$ by contrastive learning has low {\em average} linear classification loss if $\gF$ contains a function with low unsupervised loss.
Additionally, we show a generalization bound for contrastive learning that depends on the Rademacher complexity of $\gF$.
After highlighting inherent limitations of negative sampling, we show sufficient properties of $\gF$ which allow us to overcome these limitations.
\item Using insights from the above framework, we provide a novel extension of the algorithm that can leverage larger blocks of similar points than pairs, has better theoretical guarantees, and performs better in practice.
\vspace*{-0.25in}
%
\end{enumerate}
Ideally, one would like to show that contrastive learning always gives representations that {\em compete} with those learned from the same function class with plentiful labeled data.
Our formal framework allows a rigorous study of such questions: we show a simple counterexample that prevents such a blanket statement without further assumptions.
However, if the representations are well-concentrated and the mean classifier (Definition~\ref{def:avg_sup_mean}) has good performance, we can show a weaker version of the ideal result (Corollary \ref{corr:subgaussian}).
Sections~\ref{sec:framework} and \ref{sec:competitive} give an overview of the framework and the results, and subsequent sections deal with the analysis. Related work is discussed in Section~\ref{sec:related} and Section~\ref{sec:experiment} describes experimental verification and support for our framework. 

\section{Framework for Contrastive Learning}\label{sec:framework}
We first set up notation and describe the framework for unlabeled data and classification tasks that will be essential for our analysis.
Let $\gX$ denote the set of all possible data points.
Contrastive learning assumes access to {\em similar} data in the form of pairs $(x, x^+)$ that come from a distribution  $\gD_{sim}$ as well as $k$ i.i.d. {\em negative samples} $x^-_1, x^-_2, \dots, x^-_k$ from a distribution  $\gD_{neg}$ that are presumably unrelated to $x$. 
Learning is done over $\gF$, a class of {\em representation functions} $f: \gX \rightarrow \mathbb{R}^d$, such that $\|f(\cdot)\|\leq R$ for some $R>0$. 

\subsection*{Latent Classes}
To formalize the notion of semantically similar pairs $(x, x^+)$, we introduce the concept of {\em latent classes}.

{\em Let $\gC$ denote the set of all latent classes.
Associated with each class $c\in\gC$ is a probability distribution  $\gD_c$ over $\gX$
.}

%
Roughly, $\gD_c(x)$ captures how relevant $x$ is to class $c$. For example, $\gX$ could be natural images and $c$ the class ``dog"  whose associated $\gD_c$ assigns high probability to images containing dogs and low/zero probabilities to other images.
Classes can overlap arbitrarily.\footnote{An image of a dog by a tree can appear in both $\gD_{dog}$ \&  $\gD_{tree}$.} Finally, we assume a distribution $\rho$ over the classes that characterizes how these classes naturally occur in the unlabeled data.
Note that we make no assumption about the functional form of $\gD_c$ or $\rho$.
\subsection*{Semantic Similarity}
To formalize similarity, we assume similar data points $x, x^+$ are i.i.d. draws from the same class distribution  
$\gD_c$ for some class $c$
picked randomly according to measure $\rho$. Negative samples are drawn from the marginal of $\gD_{sim}$:
\begin{align}
\gD_{sim}(x,x^+)&=\mathop{\E}\limits_{c\sim\rho}\gD_c(x)\gD_c(x^+)\label{eq:sim_distribution}\\
\gD_{neg}(x^-)&=\mathop{\E}\limits_{c\sim\rho}\gD_c(x^-)\label{eq:neg_distribution}
\end{align}
Since classes are allowed to overlap and/or be fine-grained, this is a plausible formalization of ``similarity."
As the identity of the class in not revealed, we call it unlabeled data.
Currently empirical works heuristically identify such similar pairs from co-occurring image or text data.

\subsection*{Supervised Tasks} \label{subsec:sup_tasks}

We now characterize the tasks that a representation function $f$ will be tested on.
A $(k +1)$-way\footnote{We use $k$ as the number of negative samples later.} supervised task $\gT$ consists of distinct classes $\{c_1,\dots,c_{k+1}\}\subseteq \gC$.
The labeled dataset for the task $\gT$ consists of $m$ i.i.d. draws from the following process: 

{\em A label $c \in \{c_1,...,c_{k+1}\}$ is picked according to a distribution $\gD_\gT$.
Then, a sample $x$ is drawn from $\gD_c$.
Together they form a labeled pair $(x,c)$ with distribution} 
\begin{equation}\label{eq:sup_distribution}
\gD_{\gT}(x,c)=\gD_c(x)\gD_\gT(c)
\end{equation}
A key subtlety in this formulation is that the classes in downstream tasks and their associated data distributions $\gD_c$ are the same as in the unlabeled data.
This provides a path to formalizing how capturing similarity in unlabeled data can lead to quantitative guarantees on downstream tasks.
$\gD_\gT$ is assumed to be uniform\footnote{We state and prove the general case in the Appendix.} for theorems in the main paper.

\subsection*{Evaluation Metric for Representations}
The quality of the representation function $f$ is evaluated by its performance on a 
multi-class classification task $\gT$ using {\em linear classification}.
For this subsection, we fix a task $\gT =\{c_1,...,c_{k+1}\}$.
A multi-class classifier for $\gT$ is a function $g:\gX\rightarrow\R^{k+1}$ whose output coordinates are indexed by the classes $c$ in task $\gT$ 
.

The loss incurred by $g$ on point $(x,y)\in\gX\times\gT$ is defined as $\ell(\{g(x)_y-g(x)_{y'}\}_{y' \neq y})$, which is a function of a $k$-dimensional vector of differences in the coordinates.
The two losses we will consider in this work are the standard hinge loss $\ell(\vv)=\max\{0,1+\max_i\{-\vv_i\}\}$ and the logistic loss $\ell(\vv) = \log_2\left(1+\sum_{i}\exp(-\vv_i)\right)$ for $\vv\in\R^k$.
Then the supervised loss of the classifier $g$ is
\begin{equation*}
L_{sup}(\gT,g)\coloneqq\mathop{\E}_{(x,c)\sim \gD_{\gT}} \big[\ell \big(\{g(x)_c-g(x)_{c'}\}_{c'\neq c}\big)\big]
\end{equation*}
To use a representation function $f$ with a linear classifier, a matrix $W\in\R^{(k+1)\times d}$ is trained and $g(x)=Wf(x)$ is used to evaluate classification loss on tasks.
Since the best $W$ can be found by fixing $f$ and training a linear classifier, we abuse notation and define the {\em supervised loss} of $f$ on $\gT$ to be the loss when the best $W$ is chosen for $f$:
\begin{equation}\label{eqn:suploss}
L_{sup}(\gT,f)=\inf_{W\in \mathbb{R}^{(k+1)\times d}}L_{sup}(\gT, Wf)
\end{equation}
Crucial to our results and experiments will be a specific $W$ where the rows are the means of the representations of each class which we define below.
\begin{definition}[Mean Classifier]\label{def:avg_sup_mean}
For a function $f$ and task $\gT = (c_1, \dots, c_{k+1})$, the mean classifier is $W^\mu$ whose $c^{th}$ row is the mean $\mu_c$ of representations of inputs with label $c$: $\mu_c\coloneqq\mathop{\E}\limits_{x\sim\gD_c}[f(x)]$. We use $L^{\mu}_{sup}(\gT,f) \coloneqq L_{sup}(\gT,W^\mu f)$ as shorthand for its loss.
\end{definition}
Since contrastive learning has access to data with latent class distribution $\rho$, it is natural to have better guarantees for tasks involving classes that have higher probability in $\rho$.

\begin{definition}[Average Supervised Loss]\label{def:sup_loss}
Average loss for a function $f$ on $(k+1)$-way tasks is defined as
\begin{align*}
L_{sup}(f)\coloneqq\mathop{\E}\limits_{\{c_i\}_{i=1}^{k+1}\sim\rho^{k+1}}\left[L_{sup}(\{c_i\}_{i=1}^{k+1},f)\ |\ c_i\neq c_j\right]
\end{align*}
The average supervised loss of its {\em mean classifier} is
\begin{align*}
L_{sup}^\mu(f)\coloneqq\mathop{\E}\limits_{\{c_i\}_{i=1}^{k+1}\sim\rho^{k+1}}\left[L_{sup}^\mu(\{c_i\}_{i=1}^{k+1},f)\ |\ c_i\neq c_j\right]
\end{align*}
\end{definition}
\vspace*{-0.1in}


\subsection*{Contrastive Learning Algorithm}
We describe the training objective for contrastive learning: the choice of loss function is dictated by the $\ell$ used in the supervised evaluation, and $k$ denotes number of negative samples used for training.
 Let $(x,x^+) \sim \gD_{sim}$, $(x^-_1, .., x_k^-) \sim \gD_{neg}^k$ as defined in Equations~(\ref{eq:sim_distribution}) and~(\ref{eq:neg_distribution}).
\begin{definition}[Unsupervised Loss]\label{def:unsuploss}
The population loss is
\begin{equation}\label{eq:QT}
L_{un}(f)\coloneqq\mathop{\E}\left[ \ell \left(\left\{f(x)^T\big(f(x^+)-f(x^-_i)\right)\right\}_{i=1}^k \right) \Big]
\end{equation}
and its empirical counterpart with M samples $(x_j, x^+_j, x_{j1}^-,...,x_{jk}^-)_{j=1}^M$ from $\gD_{sim}\times\gD_{neg}^k$ is 
 \begin{equation}\label{def:emp_unsuploss}
 \widehat{L}_{un}(f)=\frac{1}{M}\sum_{j=1}^M  \ell \left(\left\{f(x_{j})^T\left(f(x_{j}^+)-f(x_{ji}^-)\right)\right\}_{i=1}^k \right)
\end{equation}
\end{definition}
Note that, by the assumptions of the framework described above, we can now express the unsupervised loss as
\begin{align*}
&L_{un}(f) \\
&\quad=\mathop{\E}\limits_{\substack{c^+,c_i^- \\\sim \rho^{k+1}}}\mathop{\E}\limits_{\substack{x,x^+\sim \gD_{c^+}^2 \\ x_i^- \sim  \gD_{c_i^-}}} \left[\ell \left( \left\{f(x)^T\left(f(x^+)  -  f(x_i^-)  \right)\right\} \right) \right]
\end{align*}
The algorithm to learn a representation function from $\gF$ is to find a function $\widehat{f} \in \argmin_{f \in \gF}{\widehat L_{un}(f)}$ that minimizes the empirical unsupervised loss.
This function $\widehat f$ can be subsequently used for supervised linear classification tasks.
In the following section we proceed to give an overview of our results that stem from this framework.

\section{Overview of Analysis and Results} \label{sec:competitive}
What can one {\em provably} say about the performance of $\widehat f$?
As a first step we show that $L_{un}$ is like a ``surrogate" for $L_{sup}$ by showing that $L_{sup}(f)\le\alpha L_{un}(f),\forall f\in\gF$, suggesting that minimizing $L_{un}$ makes sense.
This lets us show a bound on the supervised performance $L_{sup}(\widehat f)$ of the representation learned by the algorithm.
For instance, when training with one negative sample, the performance on average binary classification has the following guarantee:


{\bf Theorem {\color{red} \ref{thm:unsup_upper_bound} }}{\em (Informal binary version).
 \begin{align*}
&L_{sup}(\widehat f) \leq \alpha L_{un}(f) + \eta\ Gen_M + \delta &\forall f \in \gF
\end{align*}
where $\alpha, \eta,\delta$ are constants depending on the distribution $\rho$ and $Gen_M \to 0$ as $M \to \infty$. When $\rho$ is uniform and $|\gC| \to \infty$, we have that $\alpha, \eta \to 1,\ \delta \to 0$}. 

At first glance, this bound seems to offer a somewhat complete picture: {\em When the number of classes is large, if the unsupervised loss can be made small by $\gF$, then the supervised loss of $\widehat{f}$, learned using finite samples, is small.}

While encouraging, this result still leaves open the question: Can $L_{un}(f)$ indeed be made small on reasonable datasets using function classes $\gF$ of interest, even though the similar pair and negative sample can come from the same latent class?
We shed light on this by upper-bounding $L_{un}(f)$ by two components: 
(a) the loss $L_{un}^{\neq}(f)$ for the case where the positive and negative samples are from different classes;
(b) a notion of deviation $s(f)$, within each class.

{\bf Theorem \ref{thm:binary_theorem}} {\em (Informal binary version).
\begin{align*}
&L_{sup}(\widehat f) \leq L_{un}^{\neq}(f) + \beta s(f) + \eta\ Gen_M &&\forall f \in \gF
\end{align*}
for constants $\beta,\eta$ that depend on the distribution $\rho$. Again, when $\rho$ is uniform and $|\gC| \to \infty$ we have $\beta \to 0, \eta \to 1$}.

This bound lets us infer the following: {\em if the class $\gF$ is rich enough to contain a function $f $ for which $L_{un}^{\neq}(f)+\beta s(f)$ is low, then $\widehat f$ has high supervised performance.}
Both $L_{un}^{\neq}(f)$ and $s(f)$ can potentially be made small for rich enough $\gF$.

Ideally, however, one would want to show that $\widehat f$ can compete on classification tasks with every $f \in \gF$
\begin{equation}\label{ml_dream}
\text{(\em Ideal Result):}\quad L_{sup}(\widehat f) \leq \alpha L_{sup}(f) + \eta\ Gen_M
\end{equation}
Unfortunately, we show in Section~\ref{subsec:counter} that the algorithm can pick something far from the optimal $f$.
However, we extend Theorem~\ref{thm:binary_theorem} to a bound similar to (\ref{ml_dream}) (where the classification is done using the mean classifier) under assumptions about the intraclass concentration of $f$ and about its mean classifier having high margin. 

Sections \ref{subsec:k-way_guarantees} and \ref{subsec:k-way_effect} extend our results to the more complicated setting where the algorithm uses $k$ negative samples (\ref{eq:QT}) and note an interesting behavior: increasing the number of negative samples beyond a threshold can hurt the performance.
In Section \ref{subsec:CURL} we show a novel extension of the algorithm that utilizes larger blocks of similar points.
Finally, we perform controlled experiments in Section \ref{sec:experiment} to validate components of our framework and corroborate our suspicion that the mean classifier of representations learned using labeled data has good classification performance.

\section{Guaranteed Average Binary Classification}\label{sec:power_of_framework}

To provide the main insights, we prove the algorithm's guarantee when we use only 1 negative sample ($k=1$).
For this section, let $L_{sup}(f)$ and $L^{\mu}_{sup}(f)$ be as in Definition \ref{def:sup_loss} for binary tasks. 
We will refer to the two classes in the supervised task as well as the unsupervised loss as $c^+, c^-$.
Let $\gS=\{x_j,x_j^+,x_j^-\}_{j=1}^M$ be our training set sampled from the distribution $\gD_{sim}\times\gD_{neg}$ and $\widehat f\in\argmin_{f\in\gF}\widehat L_{un}(f)$.

\subsection{Upper Bound using Unsupervised Loss}\label{subsec:upper_bound}
Let $f_{|\gS}=\left(f_t(x_j),f_t(x_j^+),f_t(x_j^-)\right)_{\substack{j\in[M], t \in [d]}} \in \mathbb{R}^{3dM}$  be the restriction on $\gS$ for any $f\in \gF$.
Then, the statistical complexity measure relevant to the estimation of the representations is the following Rademacher average
\begin{align*}\label{eq:rademacher}
\gR_\gS(\gF)=\mathop{\E}\limits_{\sigma \sim \{\pm1\}^{3dM}} \big[ \sup_{f\in \gF} \langle \sigma, f_{|\gS} \rangle \big]
\end{align*}
Let $\tau=\mathop{\E}\limits_{c,c'\sim\rho^2}\1\{c=c'\}$ be the probability that two classes sampled independently from $\rho$ are the same.
\begin{theorem}\label{thm:unsup_upper_bound}
With probability at least $1-\delta$, for all $f\in\gF$
\begin{equation*}
L^{\mu}_{sup}(\widehat f)\le\frac{1}{(1-\tau)}(L_{un}(f)-\tau) + \frac{1}{(1-\tau)}Gen_M
\end{equation*}
where
\begin{equation*}\label{eq:gen_bound}
Gen_M=O \left(R\frac{   \mathcal{R}_\gS(\gF)}{M}+  R^2\sqrt{\frac{\log{\frac{1}{\delta}}}{M}} \right)
\end{equation*}
\end{theorem}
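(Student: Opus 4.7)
The plan is to show that the population unsupervised loss $L_{un}$ is an upper bound (up to the factor $1-\tau$ and a shift $\tau$) on the supervised loss of the mean classifier, and then to combine this with a standard uniform-convergence argument to move from $\widehat{L}_{un}$ back to $L_{un}$. Concretely, I would split the expectation defining $L_{un}(f)$ according to whether the latent class $c^+$ of the positive pair equals the latent class $c^-$ of the negative sample, writing
\[
L_{un}(f) \;=\; \tau\, L_{un}^{=}(f) \;+\; (1-\tau)\, L_{un}^{\neq}(f),
\]
where the two pieces are the conditional expectations on $\{c^+=c^-\}$ and $\{c^+\neq c^-\}$ respectively, and $\tau=\Pr_{c,c'\sim\rho^2}[c=c']$ as given.

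The first step bounds $L_{un}^{=}(f)\geq \ell(\mathbf 0)=1$. Conditional on $c^+=c^-=c$, the random variables $x^+$ and $x^-$ are i.i.d.\ draws from $\gD_c$, so conditional on $x$ and $c$ the expectation of $f(x^+)-f(x^-)$ vanishes; convexity of $\ell$ (for both hinge and logistic) then gives the claim via Jensen. The second step bounds $L_{un}^{\neq}(f)\geq L_{sup}^{\mu}(f)$. Conditional on distinct $c^+\neq c^-$ and on $x$, the inner expectations over $x^+\sim\gD_{c^+}$ and $x^-\sim\gD_{c^-}$ can again be pushed inside $\ell$ by Jensen, yielding $\ell\!\big(f(x)^\top(\mu_{c^+}-\mu_{c^-})\big)$; since pairs of distinct classes drawn from $\rho^2$ match exactly the distribution over binary tasks defining $L_{sup}^{\mu}$, this is precisely $L_{sup}^{\mu}(f)$. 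Combining yields
\[
L_{sup}^{\mu}(f)\;\leq\;\frac{1}{1-\tau}\bigl(L_{un}(f)-\tau\bigr)\quad\forall f\in\gF.
\]

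To complete the theorem I would apply this inequality to $\widehat f$ and then bound $L_{un}(\widehat f)\leq \widehat L_{un}(\widehat f)+\mathrm{Gen}_M \leq \widehat L_{un}(f)+\mathrm{Gen}_M\leq L_{un}(f)+2\mathrm{Gen}_M$ by a two-sided uniform convergence argument. Here $\mathrm{Gen}_M$ comes from the Rademacher complexity of the loss class $\{(x,x^+,x^-)\mapsto \ell(f(x)^\top(f(x^+)-f(x^-))):f\in\gF\}$. The argument of $\ell$ lies in $[-2R^2,2R^2]$ and $\ell$ is $O(1)$-Lipschitz, so by Talagrand's contraction lemma the relevant Rademacher average is dominated (up to a constant) by that of the class of scalar functions $(x,x^+,x^-)\mapsto f(x)^\top(f(x^+)-f(x^-))$. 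I would then decompose this inner product coordinate-wise and use that each coordinate is a product of two $R$-bounded functions, so after a standard polarization-style step the complexity is controlled by $R\,\gR_{\gS}(\gF)/M$. A McDiarmid bound on the deviation (using the boundedness $|\ell|\leq 1+2R^2$) supplies the $R^2\sqrt{\log(1/\delta)/M}$ term.

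The main obstacle I expect is the Rademacher step: the loss depends on $f$ through three evaluations and through a bilinear form in $f$'s outputs, so one cannot directly peel off $\ell$ and be left with a linear class. Handling this carefully, in particular replacing the bilinear form by a sum of coordinate products and invoking contraction on the resulting bounded scalar class to reduce to $\gR_{\gS}(\gF)$ as defined on the $3dM$-dimensional restriction, is where the careful bookkeeping lives. Everything else (Jensen, convex decomposition, and the empirical-risk-minimization chain) is short.
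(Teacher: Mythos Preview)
Your proposal is correct and follows essentially the same two-step strategy as the paper: first prove the deterministic inequality $L_{sup}^{\mu}(f)\le \frac{1}{1-\tau}(L_{un}(f)-\tau)$ via Jensen, then chain it with a uniform-convergence bound on $L_{un}(\widehat f)$. Your Jensen step is identical to the paper's up to a cosmetic reordering (you split on $\{c^+=c^-\}$ first and apply Jensen to each piece; the paper applies Jensen to the whole expectation and then splits), and both yield the same bound.

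The only place where your sketch diverges from the paper is the Rademacher calculation. You propose scalar Talagrand contraction followed by a ``polarization-style'' decomposition of the bilinear form $f(x)^\top(f(x^+)-f(x^-))$. The paper instead treats $(x,x^+,x^-)\mapsto (f(x),f(x^+),f(x^-))\in\R^{3d}$ as a vector-valued map, composes it with the $O(R)$-Lipschitz map $h=\ell\circ\phi$ (where $\phi(v,v^+,v^-)=v^\top(v^+-v^-)$, whose Jacobian has Frobenius norm $O(R)$ on the relevant domain), and applies Maurer's vector-contraction inequality to land directly on $\gR_{\gS}(\gF)$ as defined on the $3dM$-dimensional restriction. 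This is cleaner than polarization and is exactly what the definition of $\gR_{\gS}(\gF)$ in the statement is tailored for; your polarization route can be made to work but involves more bookkeeping. On the ERM chain, the paper uses one-sided uniform convergence for $\widehat f$ plus Hoeffding for the fixed minimizer $f^*$ rather than two-sided uniform convergence, but this is immaterial to the final bound.
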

\begin{remark}
The complexity measure $\gR_S(\gF)$ is tightly related to the labeled sample complexity of the classification tasks.
For the function class $\gG=\{w^T f(\cdot) | f\in \gF,\ \|w\|\leq 1\}$ that one would use to solve a binary task from scratch using labeled data, it can be shown that $\mathcal{R}_{\gS}(\gF) \leq d \mathcal{R}_\gS(\gG)$, where $\mathcal{R}_\gS(\gG)$ is the usual Rademacher complexity of $\gG$ on $\gS$ (Definition 3.1 from \cite{Mohri:18}).
\end{remark}
We state two key lemmas needed to prove the theorem.
\begin{lemma}\label{lemma:gen_bound}
With probability at least $1-\delta$ over the training set $\gS$, for all $f\in \gF$
\begin{align*}
L_{un}(\widehat{f})\leq  L_{un}(f)+Gen_M
\end{align*}
\end{lemma}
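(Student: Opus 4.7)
The plan is a standard uniform-convergence argument: decompose the excess risk using the ERM property of $\widehat f$, apply McDiarmid's inequality and symmetrization, and finally invoke a (vector-valued) contraction inequality to expose $\gR_\gS(\gF)$. Concretely, for any $f\in\gF$ I would write
\[
L_{un}(\widehat f)-L_{un}(f) = \big(L_{un}(\widehat f)-\widehat L_{un}(\widehat f)\big) + \big(\widehat L_{un}(\widehat f)-\widehat L_{un}(f)\big) + \big(\widehat L_{un}(f)-L_{un}(f)\big),
\]
observe that the middle term is $\le 0$ because $\widehat f$ minimizes $\widehat L_{un}$, and thereby reduce the lemma to a high-probability bound on $\sup_{f\in\gF}\big(L_{un}(f)-\widehat L_{un}(f)\big)$.

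Next, since $\|f(\cdot)\|\le R$, the scalar argument $f(x)^T\big(f(x^+)-f(x^-)\big)$ lies in $[-2R^2,2R^2]$, so for either the hinge or logistic loss $\ell$ is bounded by $O(R^2)$, and hence the map $\gS\mapsto\sup_f\big(L_{un}(f)-\widehat L_{un}(f)\big)$ has $O(R^2/M)$ bounded differences in each tuple $(x_j,x_j^+,x_j^-)$. McDiarmid's inequality then gives, with probability at least $1-\delta$,
\[
\sup_{f\in\gF}\!\big(L_{un}(f)-\widehat L_{un}(f)\big)\le \E\!\Big[\sup_{f\in\gF}\!\big(L_{un}(f)-\widehat L_{un}(f)\big)\Big]+O\!\left(R^2\sqrt{\tfrac{\log(1/\delta)}{M}}\right),
\]
and the standard symmetrization trick bounds the remaining expectation by $2\,\E_{\gS,\sigma}\!\Big[\sup_f\tfrac{1}{M}\sum_j\sigma_j\,\ell\!\big(f(x_j)^T(f(x_j^+)-f(x_j^-))\big)\Big]$, the symmetrized Rademacher average of the loss class.

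Finally, I would peel off $\ell$ and the inner product to expose $\gR_\gS(\gF)$. Because the hinge and logistic losses are $O(1)$-Lipschitz in their scalar argument (here $k=1$), Talagrand's contraction lemma replaces $\ell$ by the identity at the cost of an absolute constant, leaving the Rademacher average of the bilinear statistic $\phi_j(f):=f(x_j)^T\big(f(x_j^+)-f(x_j^-)\big)$. Viewed as a map on $\sR^{3d}$, $\phi_j$ has gradient of norm $\sqrt{\|f(x_j^+)-f(x_j^-)\|^2+2\|f(x_j)\|^2}=O(R)$ by the $R$-boundedness of $f$, i.e.\ it is $O(R)$-Lipschitz in the coordinates of $f_{|\gS}$. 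A vector-valued Rademacher contraction inequality then yields
\[
\E_\sigma\!\Big[\sup_f\tfrac{1}{M}\sum_j\sigma_j\phi_j(f)\Big]\le\tfrac{O(R)}{M}\,\E_\sigma\!\big[\sup_f\langle\sigma,f_{|\gS}\rangle\big]=O\!\left(R\,\tfrac{\gR_\gS(\gF)}{M}\right),
\]
which combined with the previous step produces the stated $Gen_M$. The main obstacle is precisely this last contraction: $\phi_j(f)$ is a bilinear, not linear, functional of the coordinates of $f_{|\gS}$, so the classical scalar Ledoux--Talagrand contraction does not apply out of the box. Invoking the vector-valued version with the $O(R)$ Lipschitz constant computed above is what both legitimizes the step and produces the factor $R$ multiplying $\gR_\gS(\gF)/M$; everything else is routine concentration bookkeeping.
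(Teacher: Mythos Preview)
Your proposal is correct and follows essentially the same route as the paper: bound the Rademacher complexity of the composite loss class via a vector-valued contraction inequality (the paper invokes Maurer's result and computes the same $O(R)$ Lipschitz constant for the bilinear map $\phi$ via its Jacobian), then combine with standard concentration. One small imprecision: your three-term decomposition also requires controlling the third term $\widehat L_{un}(f)-L_{un}(f)$, not just $\sup_f\big(L_{un}(f)-\widehat L_{un}(f)\big)$; the paper handles this by applying Hoeffding to the fixed population minimizer $f^*$ and then using $L_{un}(f^*)\le L_{un}(f)$ for all $f$, but a two-sided uniform bound via the same symmetrization argument works equally well.
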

\vspace*{-0.1in}

We prove Lemma \ref{lemma:gen_bound} in Appendix {\color{red} \ref{appdx:gen_bound}}.

\begin{lemma}\label{lemma:jensen}
For all $f\in\gF$
\begin{equation*}
L_{sup}^{\mu}(f)\le \frac{1}{(1-\tau)}\left(L_{un}(f)-\tau \right)
\end{equation*}
\begin{proof}
The key idea in the proof is the use of Jensen's inequality.
Unlike the unsupervised loss which uses a random point from a class as a classifier, using the mean of the class as the classifier should only make the loss lower.
Let $\mu_c=\mathop{\E}\limits_{x\sim \gD_c}f(x)$ be the mean of the class $c$. 
\begin{align*}
&L_{un}(f) = \mathop{\E}\limits_{\substack{(x,x^{+})\sim \gD_{sim}\\x^{-}\sim \gD_{neg}}} \left[\ell(f(x)^T(f(x^{+})-f(x^{-})))\right]\\
&=^{(a)} \mathop{\E}\limits_{\substack{c^{+},c^{-}\sim \rho^2\\x\sim \gD_{c^{+}}}}\mathop{\E}\limits_{\substack{x^{+}\sim \gD_{c^{+}}\\x^{-}\sim \gD_{c^{-}}}} \left[\ell(f(x)^T(f(x^{+})-f(x^{-})))\right]\\
&\ge^{(b)} \mathop{\E}\limits_{c^{+},c^{-}\sim \rho^2}\mathop{\E}\limits_{x\sim \gD_{c^{+}}} \left[\ell(f(x)^T(\mu_{c^{+}}-\mu_{c^{-}}))\right]\\
&=^{(c)}(1-\tau)\mathop{\E}\limits_{c^{+},c^{-}\sim \rho^2} [L^{\mu}_{sup}(\{c^{+},c^{-}\}, f)|c^{+}\neq c^{-}]+\tau \\
&=^{(d)} (1-\tau)L_{sup}^{\mu}(f)+\tau
\end{align*}
where (a) follows from the definitions in (\ref{eq:sim_distribution}) and (\ref{eq:neg_distribution}), (b) follows from the convexity of $\ell$ and Jensen's inequality by taking the expectation over $x^{+}$, $x^{-}$ inside the function, (c) follows by splitting the expectation into the cases $c^{+}=c^{-}$ and $c^{+}\neq c^{-}$, from symmetry in $c^{+}$ and $c^{-}$ in sampling and since classes in tasks are uniformly distributed (general distributions are handled in Appendix~\ref{appdx:multiclass}). Rearranging terms completes the proof.
\end{proof}
\end{lemma}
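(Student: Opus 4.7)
The plan is to start from the definition of $L_{un}(f)$, rewrite $\gD_{sim}$ and $\gD_{neg}$ using their latent-class factorizations from (\ref{eq:sim_distribution}) and (\ref{eq:neg_distribution}) so that the outer randomness is over a pair of classes $(c^+,c^-)\sim\rho^2$ and an ``anchor'' $x\sim\gD_{c^+}$, while the inner randomness is $x^+\sim\gD_{c^+}$ and $x^-\sim\gD_{c^-}$. The argument of $\ell$ is then linear in $f(x^+)$ and $f(x^-)$ with $f(x)$ held fixed, so one can apply Jensen's inequality to the inner expectation to get
$$\mathop{\E}_{\substack{x^+\sim\gD_{c^+}\\x^-\sim\gD_{c^-}}}\ell\!\bigl(f(x)^\top(f(x^+)-f(x^-))\bigr)\;\ge\;\ell\!\bigl(f(x)^\top(\mu_{c^+}-\mu_{c^-})\bigr),$$
using convexity of the hinge and logistic losses. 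It is important to apply Jensen only to the $x^+,x^-$ integrals and not to the outer $x$-integral, since collapsing $f(x)$ to $\mu_{c^+}$ would break the identification with the supervised mean-classifier loss.

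Next I would split the outer expectation over $(c^+,c^-)$ into the events $\{c^+=c^-\}$ and $\{c^+\neq c^-\}$, which by definition have probability $\tau$ and $1-\tau$ respectively. On $\{c^+=c^-\}$ the difference $\mu_{c^+}-\mu_{c^-}$ vanishes, the integrand reduces to $\ell(0)$, and one checks directly that $\ell(0)=1$ for both the binary hinge loss ($\max\{0,1+0\}=1$) and the binary logistic loss ($\log_2(1+e^{0})=1$). On $\{c^+\neq c^-\}$, the remaining quantity $\mathop{\E}_{x\sim\gD_{c^+}}\ell(f(x)^\top(\mu_{c^+}-\mu_{c^-}))$ is, by Definition~\ref{def:avg_sup_mean}, exactly $L_{sup}^\mu(\{c^+,c^-\},f)$ evaluated on the label $c^+$; symmetry in $(c^+,c^-)$ under $\rho^2$ then averages this with the label-$c^-$ version, yielding $L_{sup}^\mu(\{c^+,c^-\},f)$ for a uniformly labeled binary task, which after averaging over unordered class pairs matches $L_{sup}^\mu(f)$ from Definition~\ref{def:sup_loss}.

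Combining the two cases gives $L_{un}(f)\ge(1-\tau)L_{sup}^\mu(f)+\tau$, and rearranging yields the stated inequality. The main obstacle I anticipate is bookkeeping in the class-pair averaging: one must verify that the symmetrization over $(c^+,c^-)$ (which includes both orderings of a pair) combined with the uniform $\gD_\gT$ over the two labels inside a fixed binary task correctly reconstructs $L_{sup}^\mu$ as defined, and that the $c^+=c^-$ contribution is genuinely $\tau\cdot\ell(0)=\tau$ rather than a more complicated intra-class term — this is precisely where the specific value $\ell(0)=1$ for hinge/logistic is used. The Jensen step itself is routine given convexity; the generalization to non-uniform $\gD_\gT$ and $k>1$ negatives (deferred to the appendix) would require redoing this bookkeeping more carefully.
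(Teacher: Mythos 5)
Your proposal is correct and follows essentially the same route as the paper's proof: factorize $\gD_{sim},\gD_{neg}$ via the latent classes, apply Jensen to the inner $x^+,x^-$ expectation only, split on $\{c^+=c^-\}$ versus $\{c^+\neq c^-\}$ using $\ell(0)=1$, and recover $L_{sup}^\mu(f)$ by symmetry. The bookkeeping points you flag (where Jensen is applied, the value $\ell(0)=1$, and the symmetrization over orderings matching the uniform $\gD_\gT$) are exactly the ones the paper relies on in steps (b)--(d).
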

\begin{proof}[Proof of Theorem \ref{thm:unsup_upper_bound}]
The result follows directly by applying Lemma \ref{lemma:jensen} for $\widehat f$ and finishing up with Lemma \ref{lemma:gen_bound}.
\end{proof}

One could argue that if $\gF$ is rich enough such that $L_{un}$ can be made small, then Theorem~\ref{thm:unsup_upper_bound} suffices.
However, in the next section we explain that unless $\tau\ll 1$, this may not always be possible and we show one way to alleviate this.

\subsection{Price of Negative Sampling: Class Collision}\label{subsec:collision}
Note first that the unsupervised loss can be decomposed as
\begin{equation}\label{eq:decomposition}
L_{un}(f) = \tau L^=_{un}(f)+(1-\tau) L^{\neq}_{un}(f)
\end{equation}
where $L^{\neq}_{un}(f)$ is the loss suffered when the similar pair and the negative sample come from different classes. 
\begin{align*}
&L^{\neq}_{un}(f)\\
&\quad= \mathop{\E}\limits_{\substack{c^{+},c^{-}\sim\rho^2\\x,x^{+}\sim\gD_{c^{+}}^2\\x^{-}\sim\gD_{c^{-}}}} \left[\ell(f(x)^T(f(x^{+})-f(x^{-}))) | c^{+}\neq c^{-}\right]
\end{align*}
and $L^=_{un}(f)$ is when they come from the {\em same class}. Let $\nu$ be a distribution over $\gC$ with $\nu(c)\propto \rho^2(c)$, then
\begin{align*}
L^{=}_{un}(f) &= \mathop{\E}\limits_{\substack{c \sim \nu \\ x,x^{+},x^{-} \sim \gD_{c}^3}} \left[\ell(f(x)^T(f(x^{+})-f(x^{-})))\right]\\
	&\ge \mathop{\E}\limits_{c\sim \nu, x\sim\gD_{c}} \left[\ell(f(x)^T(\mu_{c}-\mu_{c}))\right]=1
\end{align*}
by Jensen's inequality again, which implies $L_{un}(f)\ge \tau$.
In general, without any further assumptions on $f$, $L_{un}(f)$ can be far from $\tau$, rendering the bound in Theorem \ref{thm:unsup_upper_bound} useless.
However, as we will show, the magnitude of $L_{un}^=(f)$ can be controlled by the intraclass deviation of $f$. Let $\Sigma(f,c)$ the covariance matrix of $f(x)$ when $x\sim \gD_c$.
We define a notion of intraclass deviation as follows:
\begin{align}\label{eq:deviation}
s(f)\coloneqq\mathop{\E}\limits_{c\sim \nu} \Big[\sqrt{\|\Sigma(f,c)\|_2} \mathop{\E}\limits_{x\sim \gD_c}\|f(x)\|\Big]
\end{align}


\begin{lemma}\label{lemma:deviation}
For all $f\in \gF$,
\begin{equation*}
 L_{un}^=(f)-1\leq c's(f)
\end{equation*}
where $c'$ is a positive constant. 
\end{lemma}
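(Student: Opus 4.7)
The plan is to control $L_{un}^=(f)-1$ by noticing that $\ell(0)=1$ for both the hinge loss and the logistic loss in the binary ($k=1$) case (since $\max\{0,1-0\}=1$ and $\log_2(1+e^0)=1$), and that $\ell(0)$ is exactly the loss we would incur if $f(x^+)$ and $f(x^-)$ were both replaced by the class mean $\mu_c$. So I will compare $\ell(f(x)^T(f(x^+)-f(x^-)))$ to $\ell(0)$ and bound their difference using Lipschitzness of $\ell$, then exploit the fact that, conditionally on $x$, the scalar $f(x)^T(f(x^+)-f(x^-))$ has mean zero.

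Concretely, I would first write, using the $C$-Lipschitzness of $\ell$ (with $C=1$ for hinge and $C=1/\ln 2$ for logistic),
\[
\ell\bigl(f(x)^T(f(x^+)-f(x^-))\bigr) - 1 \;\le\; C\,\bigl|f(x)^T(f(x^+)-f(x^-))\bigr|.
\]
Next, for fixed $c$ and $x$, since $x^+,x^-\sim\gD_c^2$ are i.i.d., the random variable $f(x)^T(f(x^+)-f(x^-))$ has mean $0$ and variance $2\,f(x)^T\Sigma(f,c)f(x)$. Jensen's inequality applied to the absolute value of a mean-zero variable then gives
\[
\mathop{\E}_{x^+,x^-\sim\gD_c^2}\bigl|f(x)^T(f(x^+)-f(x^-))\bigr| \;\le\; \sqrt{2\,f(x)^T\Sigma(f,c)f(x)} \;\le\; \sqrt{2}\,\sqrt{\|\Sigma(f,c)\|_2}\,\|f(x)\|,
\]
where the last step is the operator-norm bound on a quadratic form.

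Finally I will take expectation over $x\sim\gD_c$ and $c\sim\nu$ and recognize the right-hand side as $\sqrt{2}\,s(f)$ by the definition in \eqref{eq:deviation}. Combining with the Lipschitz inequality above yields $L_{un}^=(f)-1 \le C\sqrt{2}\,s(f)$, proving the lemma with constant $c'=C\sqrt{2}$. I do not expect a real obstacle: the only subtlety is to align the baseline $\ell(0)=1$ with the ``mean-substituted'' inner product $f(x)^T(\mu_c-\mu_c)=0$ so no stray constants appear; beyond that it is a routine Jensen-plus-Cauchy-Schwarz computation that turns the intraclass randomness of $f(x^+)-f(x^-)$ into the covariance term inside $s(f)$.
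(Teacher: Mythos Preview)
Your proof is correct and follows essentially the same route as the paper: the paper also reduces $L_{un}^=(f)-\ell(0)$ to a bound on $\mathbb{E}\bigl[|f(x)^T(f(x^+)-f(x^-))|\bigr]$ and then controls this (for fixed $x$) by $\sqrt{2}\,\|f(x)\|\sqrt{\|\Sigma(f,c)\|_2}$ via the second-moment/operator-norm argument you give. The only cosmetic difference is that the paper handles the outer step by the elementary inequalities $(1+z)_+\le 1+z_+$ (hinge) and $\log_2(1+e^z)\le\log_2 2+z_+/\ln 2$ (logistic) rather than invoking Lipschitzness directly, and it phrases everything inside a more general $t$-negative-sample lemma before specializing to $t=1$.
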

We prove Lemma~\ref{lemma:deviation} in Appendix~\ref{lemma:general_deviation}.
Theorem \ref{thm:unsup_upper_bound} combined with Equation~(\ref{eq:decomposition}) and Lemma~\ref{lemma:deviation} gives the following result.

\begin{theorem}\label{thm:binary_theorem}
With probability at least $1-\delta$, $\forall f\in \gF$
\begin{equation*} \label{main}
L_{sup}(\widehat{f})\leq  L_{sup}^{\mu}(\widehat{f}) \leq  L^{\neq}_{un}(f)+\beta \ s(f) +\eta \ Gen_M
\end{equation*}
where $\beta= c'\frac{\tau }{1-\tau}$, $\eta=\frac{1 }{1-\tau}$ and $c'$ is a constant.
\end{theorem}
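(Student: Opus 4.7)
The plan is to chain together the three ingredients the authors have already developed, with essentially no new analytic work required. First I would observe that the leftmost inequality $L_{sup}(\widehat f) \le L_{sup}^{\mu}(\widehat f)$ is immediate from the definition of $L_{sup}$ in~(\ref{eqn:suploss}) as an infimum over linear classifiers $W$: the mean classifier $W^\mu$ is one admissible choice, so its loss is at least the infimum. This lets me focus the rest of the argument on bounding $L_{sup}^{\mu}(\widehat f)$.

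Next I would invoke Theorem~\ref{thm:unsup_upper_bound}, which with probability at least $1-\delta$ gives, uniformly over $f \in \gF$,
\[
L_{sup}^{\mu}(\widehat f) \;\le\; \frac{1}{1-\tau}\bigl(L_{un}(f) - \tau\bigr) + \frac{1}{1-\tau}\, Gen_M.
\]
The remaining task is purely algebraic: rewrite $L_{un}(f) - \tau$ in terms of $L_{un}^{\neq}(f)$ and $s(f)$. For this I would substitute the decomposition~(\ref{eq:decomposition}), $L_{un}(f) = \tau L_{un}^{=}(f) + (1-\tau) L_{un}^{\neq}(f)$, and then use Lemma~\ref{lemma:deviation}, namely $L_{un}^{=}(f) \le 1 + c' s(f)$, to obtain
\[
L_{un}(f) - \tau \;\le\; \tau\bigl(1 + c' s(f)\bigr) + (1-\tau)L_{un}^{\neq}(f) - \tau \;=\; (1-\tau)L_{un}^{\neq}(f) + c'\tau\, s(f).
\]

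Dividing through by $1-\tau$ and plugging back into the bound from Theorem~\ref{thm:unsup_upper_bound} yields exactly
\[
L_{sup}^{\mu}(\widehat f) \;\le\; L_{un}^{\neq}(f) + \frac{c'\tau}{1-\tau}\, s(f) + \frac{1}{1-\tau}\, Gen_M,
\]
which matches the claim with $\beta = c'\tau/(1-\tau)$ and $\eta = 1/(1-\tau)$. There is no serious obstacle here, since all of the conceptual difficulty has been absorbed into the prior results: Theorem~\ref{thm:unsup_upper_bound} handled the Jensen step linking unsupervised loss to the mean-classifier supervised loss together with the Rademacher-based uniform convergence, and Lemma~\ref{lemma:deviation} handled the control of the class-collision term $L_{un}^{=}$ by the intraclass deviation $s(f)$. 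The only point worth double-checking in writing up is that Theorem~\ref{thm:unsup_upper_bound} is stated uniformly in $f$ (so the same high-probability event supports the final ``$\forall f \in \gF$'' quantifier), which it is.
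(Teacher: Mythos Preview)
Your proposal is correct and follows essentially the same route as the paper, which simply states that Theorem~\ref{thm:unsup_upper_bound} combined with the decomposition~(\ref{eq:decomposition}) and Lemma~\ref{lemma:deviation} yields the result. Your write-up faithfully unpacks that chain, and the algebra and the handling of the uniform-in-$f$ high-probability event are both right.
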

The above bound highlights two sufficient properties of the function class for unsupervised learning to work: when the function class $\gF$ is rich enough to contain {\em some $f$}  with low $\beta s(f)$ as well as low $L_{un}^{\neq}(f)$ then $\widehat f$, the empirical minimizer of the unsupervised loss -- learned using sufficiently large number of samples -- will have good performance on supervised tasks (low $L_{sup}(\widehat f))$.
\section{Towards Competitive Guarantees}\label{sec:competitive_guarantees}
We provide intuition and counter-examples for why contrastive learning does not always pick the best supervised representation $f \in \gF$ and show how our bound captures these.
Under additional assumptions, we show a competitive bound where classification is done using the mean classifier.
\subsection{Limitations of contrastive learning}\label{subsec:counter}
The bound provided in Theorem~\ref{thm:binary_theorem} might not appear as the most natural guarantee for the algorithm.
Ideally one would like to show a bound like the following: for all $f\in\gF$,
\vspace*{-0.05in}
\begin{align}
\text{(\em Ideal 1):}\quad L_{sup}(\widehat{f})\leq \alpha L_{sup}(f)+\eta\ Gen_{M}\label{eq:ideal1}
\end{align}
for constants $\alpha, \eta$ and generalization error $Gen_{M}$.
This guarantees that $\widehat{f}$ is competitive against the {\em best $f$} on the average binary classification task. However, the bound we prove has the following form: for all $f\in \gF$,
\begin{align*}
L^\mu_{sup}(\widehat{f}) \leq \alpha L_{un}^{\neq}(f) + \beta s(f) + \eta\ Gen_M
\end{align*}
To show that this discrepancy is not an artifact of our analysis but rather stems from limitations of the algorithm, we present two examples in Figure \ref{fig:counter}. Our bound appropriately captures these two issues individually owing to the large values of $L^{\neq}(f)$ or $s(f)$ in each case, for the optimal $f$.

In Figure \ref{fig:counter1}, we see that there is a direction on which $f_1$ can be projected to perfectly separate the classes.
Since the algorithm takes inner products between the representations, it inevitably considers the spurious components along the orthogonal directions.
This issue manifests in our bound as the term $L^{\neq}_{un}(f_1)$ being high even when $s(f_1)=0$.
Hence, contrastive learning will not always work when the only guarantee we have is that $\gF$ can make $L_{sup}$ small.

This should not be too surprising, since we show a relatively strong guarantee -- a bound on $L^{\mu}_{sup}$ for the {\em mean classifier} of $\widehat f$.
This suggests a natural stronger assumption that $\gF$ can make $L_{sup}^\mu$ small (which is observed experimentally in Section~\ref{sec:experiment} for function classes of interest) and raises the question of showing a bound that looks like the following: for all $f\in \gF$,
\begin{equation}\label{dream2}
\text{(\em Ideal 2):}\quad L_{sup}^\mu(\widehat{f})\leq \alpha L_{sup}^\mu(f)+ \eta Gen_{M}
\end{equation}
without accounting for any intraclass deviation -- recall that $s(f)$ captures a notion of this deviation in our bound.
However this is not true: high intraclass deviation may not imply high $L_{sup}^\mu(f)$, but can make $L_{un}^=(f)$ (and thus $L_{un}(f)$) high, resulting in the failure of the algorithm.
Consequently, the term $s(f)$ also increases while $L_{un}^{\neq}$ does not necessarily have to.
This issue, apparent in Figure \ref{fig:counter2}, shows that a guarantee like (\ref{dream2}) cannot be shown without further assumptions.  
\begin{figure}[!t]
  \begin{subfigure}[b]{0.30\columnwidth}\label{fig:counter_figure1}
    \centering
    \includegraphics[width=\linewidth]{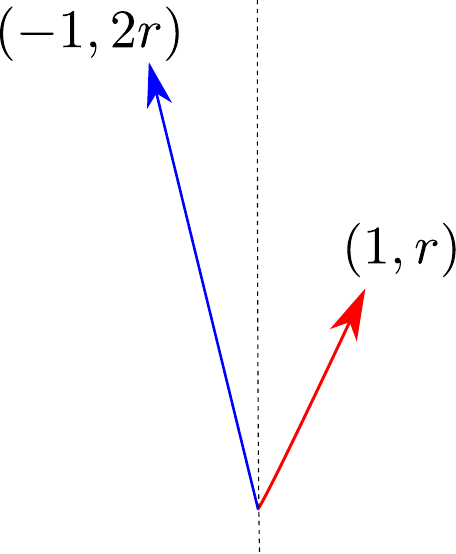}
    \caption{Mean is bad}
    \label{fig:counter1}
  \end{subfigure}
  \hfill
  \begin{subfigure}[b]{0.48\columnwidth}\label{fig:counter_figure2}
    \centering
    \includegraphics[scale=0.25]{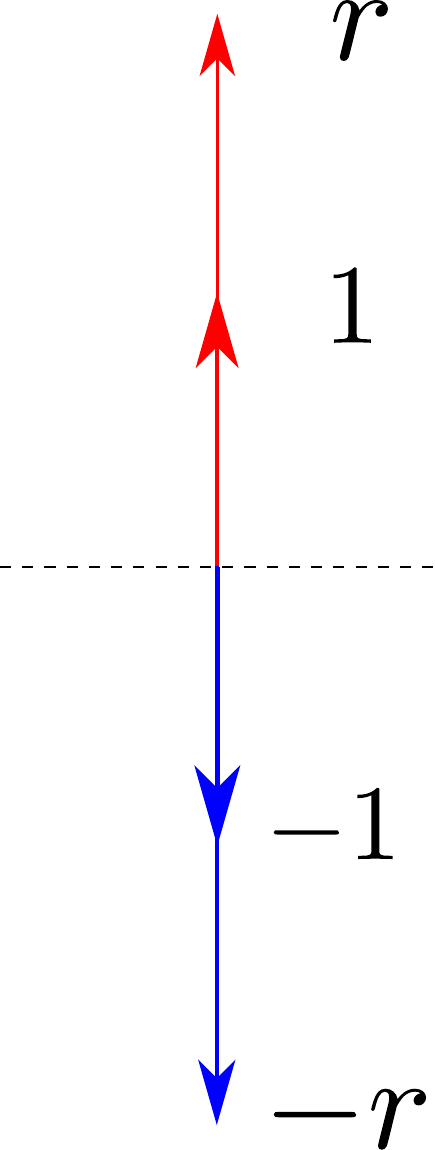}
    \caption{High intraclass variance}
    \label{fig:counter2}
  \end{subfigure}
  \caption{{\em In both examples we have uniform distribution over classes $\gC=\{c_1,c_2\}$, blue and red points are in $c_1$ and $c_2$ respectively and $\gD_{c_i}$ is uniform over the points of $c_i$. In the first figure we have one point per class, while in the second we have two points per class. Let $\gF=\{f_0,f_1\}$ where $f_0$ maps all points to $(0,0)$ and $f_1$ is defined in the figure. In both cases, using the hinge loss, $L_{sup}(f_1)=0$, $L_{sup}(f_0) = 1$ and in the second case $L_{sup}^\mu(f_1)=0$. However, in both examples the algorithm will pick $f_0$ since $L_{un}(f_0) = 1$ but $L_{un}(f_1) = \Omega(r^2)$.}}\label{fig:counter}
\end{figure}

\subsection{Competitive Bound via Intraclass Concentration}\label{subsec:subgaussian}
We saw that $L_{sup}^\mu(f)$ being small does not imply low $L_{sup}^\mu(\widehat{f})$, if $f$ is not concentrated within the classes. In this section we show that when there is an $f$ that has intraclass concentration in a strong sense (sub-Gaussianity) and can separate classes with high margin (on average) with the mean classifier, then $L_{sup}^\mu(\widehat f)$ will be low. 

Let $\ell_\gamma(x)=(1-\frac{x}{\gamma})_+$ be the hinge loss with margin $\gamma$ and $L_{\gamma,sup}^\mu(f)$ be $L_{sup}^\mu(f)$ with the loss function $\ell_{\gamma}$.
\begin{lemma}\label{lemma:subgaussian}
For $f \in \gF$, if the random variable $f(X)$, where $X\sim D_c$, is $\sigma^2$-sub-Gaussian in every direction for every class $c$ and has maximum norm $R = max_{x\in \gX} \|f(x)\|$, then for all $\epsilon > 0$,
\begin{equation*}
\begin{split}
L_{un}^{\neq}(f)\leq \gamma L_{\gamma,sup}^\mu(f)+\epsilon
\end{split}
\end{equation*}
where $\gamma= 1+c'R\sigma\sqrt{\log \frac{R}{\epsilon}}$ and $c'$ is some constant.
\end{lemma}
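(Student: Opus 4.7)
The strategy is to decompose the inner product appearing in $L_{un}^{\neq}$ into the mean-classifier margin plus a mean-zero residual, control the residual via the sub-Gaussian assumption, and absorb the resulting slack into an enlarged hinge margin.

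First, write $f(x)^\top\big(f(x^+) - f(x^-)\big) = A + Z$, where $A := f(x)^\top(\mu_{c^+} - \mu_{c^-})$ and $Z := f(x)^\top\big((f(x^+) - \mu_{c^+}) - (f(x^-) - \mu_{c^-})\big)$. The term $A$ is exactly the margin that the mean classifier $W^\mu$ assigns to the labeled example $(x,c^+)$ in the binary task $\{c^+, c^-\}$, so the expectation of $\ell_\gamma(A)$ over $x$ and over $c^+ \neq c^-$ equals $L_{\gamma,sup}^\mu(f)$.

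Second, I would bound $Z$ using sub-Gaussianity. Conditioning on $x, c^+, c^-$, the two summands of $Z$ are independent and mean-zero; since $f$ is $\sigma^2$-sub-Gaussian in every direction under each $\gD_c$, each summand is a $\sigma^2\|f(x)\|^2$-sub-Gaussian real-valued random variable. Summing independent sub-Gaussians, $Z$ is $2\sigma^2 R^2$-sub-Gaussian, yielding the tail bound $\Pr[|Z| > t \mid x, c^+, c^-] \leq 2\exp\!\big(-t^2/(4R^2\sigma^2)\big)$ for every $t > 0$. Then for a threshold $t > 0$, I would split $\E[\ell(A+Z)]$ on the asymmetric good event $\{Z \geq -t\}$ and its complement. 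On the good event, monotonicity of the hinge gives $\ell(A+Z) = (1-A-Z)_+ \leq (1+t-A)_+ = (1+t)\,\ell_{1+t}(A)$. On the complement, the crude bound $\ell(A+Z) \leq 1 + |A| + |Z| \leq 1 + 2R^2$ follows from $\|f(\cdot)\| \leq R$. Combining and taking expectations yields
\begin{align*}
L_{un}^{\neq}(f) \leq (1+t)\,L_{1+t,\,sup}^\mu(f) + 2(1+2R^2)\,e^{-t^2/(4R^2\sigma^2)}.
\end{align*}
Choosing $t = c' R\sigma\sqrt{\log(R/\epsilon)}$ for a large enough absolute constant $c'$ drives the tail term below $\epsilon$ and produces the stated margin $\gamma = 1 + t$.

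The main obstacle I expect is the good-event inequality: recovering a clean $\ell_{1+t}$ bound (rather than $\ell_t$, or a bound that loses an extra factor of $1/t$) requires working with the asymmetric event $\{Z \geq -t\}$ and invoking monotonicity of the hinge, not just its Lipschitzness. A secondary subtlety is that the sub-Gaussian-in-every-direction hypothesis is precisely what gives the clean $\sigma^2\|f(x)\|^2$ variance proxy for the conditionally scalar projection onto the data-dependent direction $f(x)$; a merely scalar (norm) sub-Gaussianity would not suffice for this step.
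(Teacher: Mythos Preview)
Your proposal is correct and follows essentially the same approach as the paper's proof: the same decomposition $A+Z$ (the paper writes it with the opposite sign as $\mu+z$), the same $2R^2\sigma^2$-sub-Gaussian bound on the residual conditional on $x$, the same split into a good event where the hinge absorbs the slack into the margin $\gamma=1+t$ and a bad event bounded via $\|f\|\le R$, and the same choice of threshold. The one step you state without justification---that $\E\big[\ell_\gamma(A)\mid c^+\neq c^-\big]=L_{\gamma,\mathrm{sup}}^\mu(f)$---requires a short symmetrization (swap the roles of $c^+$ and $c^-$, which are exchangeable under $\rho^2$ conditioned on $c^+\neq c^-$, and use that $\gD_\gT$ is uniform on the two classes); the paper spells this out explicitly, so you should too.
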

The proof of Lemma \ref{lemma:subgaussian} is provided in the Appendix \ref{appdx:proof_subgaussian}.
Using Lemma~\ref{lemma:subgaussian} and Theorem~\ref{thm:binary_theorem}, we get the following:
\begin{corollary}\label{corr:subgaussian}
For all $\epsilon>0$, with probability at least $1-\delta$, for all $f\in \gF$,
\begin{equation*}
L_{sup}^{\mu}(\widehat{f}) \leq \gamma(f) L_{\gamma(f), sup}^\mu(f) + \beta s(f) + \eta Gen_M+ \epsilon
\end{equation*}
where $\gamma(f)$ is as defined in Lemma \ref{lemma:subgaussian}, $\beta= c'\frac{\tau }{1-\tau}$, $\eta=~\frac{\tau }{1-\tau}$ and $c'$ is a constant.
\end{corollary}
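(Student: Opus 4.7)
The plan is to obtain the corollary as a direct composition of Theorem~\ref{thm:binary_theorem} and Lemma~\ref{lemma:subgaussian}, so almost all of the real work has already been done in those two results. First I would invoke Theorem~\ref{thm:binary_theorem}: with probability at least $1-\delta$ over the unlabeled training set, the inequality
$$L_{sup}^{\mu}(\widehat{f}) \leq L_{un}^{\neq}(f) + \beta\, s(f) + \eta\, Gen_M$$
holds simultaneously for every $f\in\gF$. Next, for each fixed $f\in\gF$ whose per-class representation $f(X)$ is $\sigma^2$-sub-Gaussian in every direction (a deterministic property of $f$ and the class distributions $\gD_c$, independent of the training set), I would apply Lemma~\ref{lemma:subgaussian} to obtain
$$L_{un}^{\neq}(f) \leq \gamma(f)\, L_{\gamma(f), sup}^{\mu}(f) + \epsilon,$$
where $\gamma(f) = 1 + c' R\sigma\sqrt{\log(R/\epsilon)}$ depends on $f$ only through $R$ and $\sigma$.

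Substituting this pointwise bound into the right-hand side of the Theorem~\ref{thm:binary_theorem} inequality (which only involves $f$, not $\widehat f$) yields exactly the claimed bound. The only step requiring any care is the interaction of quantifiers: Theorem~\ref{thm:binary_theorem} is a high-probability statement that holds uniformly over $\gF$, while Lemma~\ref{lemma:subgaussian} is purely analytic and deterministic, holding pointwise for each $f$ satisfying the sub-Gaussian hypothesis. Because the substitution introduces no new randomness, the deterministic upper bound on $L_{un}^{\neq}(f)$ can be inserted inside the same $1-\delta$ event, and uniformity in $f$ is preserved.

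There is no genuine obstacle to this argument; the substantive content is packaged into Lemma~\ref{lemma:subgaussian}, which converts the algorithmic but unintuitive quantity $L_{un}^{\neq}(f)$ into a margin-based supervised loss of the mean classifier. The corollary itself is really the observation that this conversion, combined with Theorem~\ref{thm:binary_theorem}, pushes the overall bound one step closer to the aspirational form~(\ref{dream2}) discussed in Section~\ref{subsec:counter}, at the price of an additive intraclass-deviation term $\beta s(f)$, a margin factor $\gamma(f)$ that grows mildly with the sub-Gaussian radius, and a slack $\epsilon$ that can be made arbitrarily small by increasing $\gamma(f)$ logarithmically.
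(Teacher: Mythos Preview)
Your proposal is correct and matches the paper's approach exactly: the paper simply remarks ``Using Lemma~\ref{lemma:subgaussian} and Theorem~\ref{thm:binary_theorem}, we get the following'' before stating the corollary, and your write-up spells out that substitution in full, including the care about quantifiers that the paper leaves implicit.
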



%
%
%
%
%
%

\section{Multiple Negative Samples and Block Similarity}\label{sec:extension}
In this section we explore two extensions to our analysis.
First, in Section~\ref{subsec:k-way_guarantees}, inspired by empirical works like \citet{Logeswaran:18} that often use more than one negative sample for every similar pair, we show provable guarantees for this case by careful handling of class collision.
Additionally, in Section~\ref{subsec:k-way_effect} we show simple examples where increasing negative samples beyond a certain threshold can hurt contrastive learning.
Second, in Section~\ref{subsec:CURL}, we explore a modified algorithm that leverages access to {\em blocks} of similar data, rather than just pairs and show that it has stronger guarantees as well as performs better in practice.

\subsection{Guarantees for $k$ Negative Samples}\label{subsec:k-way_guarantees}
Here the algorithm utilizes $k$ negative samples $x_1^-,...,x_{k}^-$ drawn i.i.d. from $\gD_{neg}$ for every positive sample pair $x,x^+$ drawn from $\gD_{sim}$ and minimizes (\ref{def:emp_unsuploss}).
As in Section~\ref{sec:power_of_framework}, we prove a bound for $\widehat{f}$ of the following form:
\begin{theorem}\label{thm:multiclass_theorem}(Informal version) For all $f\in\gF$
\begin{equation*}
\gL_{sup}(\widehat f)\le\gL_{sup}^\mu(\widehat{f})\leq \alpha L_{un}^{\neq}(f)+\beta s(f) + \eta\ Gen_{M}
\end{equation*}
\end{theorem}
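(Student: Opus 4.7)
My plan is to generalize the three-step pipeline that gave Theorem \ref{thm:binary_theorem} (generalization bound, Jensen's inequality, intraclass-deviation control) from $k=1$ to general $k$, with the main new difficulty being the combinatorial bookkeeping of which of the $k$ negative draws happen to collide in latent class with the positive.

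First, I would write down a generalization bound in the style of Lemma \ref{lemma:gen_bound}: using that $\ell$ (hinge or logistic) is Lipschitz in its $k$-dimensional argument and that $\|f(\cdot)\| \le R$, standard vector-contraction Rademacher arguments give, with probability $\ge 1-\delta$,
\begin{equation*}
L_{un}(\widehat f)\ \le\ L_{un}(f)+ Gen_M,\qquad \forall f\in\gF,
\end{equation*}
where $Gen_M$ has the same shape as in Theorem \ref{thm:unsup_upper_bound}, picking up only a benign dependence on $k$ through the Lipschitz constant of $\ell$ and the summation over $k$ negatives. This is the generalization error that appears in the final bound.

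Next, I would prove the analogue of Lemma \ref{lemma:jensen}. Since $\ell$ is convex in its vector input and each coordinate $f(x)^{\top}\!\big(f(x^+)-f(x_i^-)\big)$ is linear in $f(x^+)$ and in $f(x_i^-)$, one can push the expectations over $x^+\!\sim\!\gD_{c^+}$ and $x_i^-\!\sim\!\gD_{c_i^-}$ inside $\ell$, obtaining
\begin{equation*}
L_{un}(f)\ \ge\ \mathop{\E}\limits_{\substack{c^+,c_1^-,\dots,c_k^-\sim\rho^{k+1}\\ x\sim\gD_{c^+}}}\!\ell\!\left(\bigl\{f(x)^{\top}(\mu_{c^+}-\mu_{c_i^-})\bigr\}_{i=1}^{k}\right).
\end{equation*}
Then I would partition the outer expectation by the \emph{collision pattern} $Q=\{i:c_i^-=c^+\}\subseteq[k]$ and further condition on the non-collided classes being mutually distinct. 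Summing over $Q=\emptyset$ with all other $c_i^-$ distinct yields exactly the probability $p$ of sampling a valid $(k{+}1)$-way task times $L_{sup}^{\mu}(f)$; the remaining terms are lumped into a ``collision'' piece $L_{un}^{=}(f)$ that is $\ge $ a task-independent constant (the loss on an all-zero margin vector, obtained again by Jensen). This gives a decomposition
\begin{equation*}
L_{un}(f)\ \ge\ (1-\tau_k)\,L_{un}^{\ne,\mu}(f)\ +\ \tau_k\,L_{un}^{=,\mu}(f),
\end{equation*}
with $\tau_k=\Pr[\exists\,i:c_i^-=c^+]$, and $L_{un}^{\ne,\mu}(f)\ge L_{sup}^{\mu}(f)$ up to a further factor accounting for residual collisions among the $c_i^-$'s themselves (which also contribute only vanishing excess under uniform $\rho$ when $|\gC|$ is large).

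Finally, I would generalize Lemma \ref{lemma:deviation} to control $L_{un}^{=,\mu}(f)$ minus its Jensen-floor by $O(s(f))$. The idea is the same as in the binary case: smoothness of $\ell$ at $0$ plus a Taylor argument bounds the excess loss coordinate-wise by a quantity controlled by $\sqrt{\|\Sigma(f,c)\|_2}\,\E_{x\sim\gD_c}\|f(x)\|$, so that averaging over the collided class recovers the definition of $s(f)$. Combining this with the Jensen decomposition above and the generalization bound, and rearranging, produces
\begin{equation*}
L_{sup}^{\mu}(\widehat f)\ \le\ \alpha\,L_{un}^{\ne}(f)+\beta\,s(f)+\eta\,Gen_M,
\end{equation*}
with $\alpha,\beta,\eta$ explicit functions of the collision probabilities (generalizations of $\tau$), tending to the ideal constants as $|\gC|\to\infty$ for uniform $\rho$.

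The main obstacle will be the third step: unlike the binary case where a single indicator $\mathbf{1}\{c^+=c^-\}$ splits the loss cleanly, with $k$ negatives the loss $\ell$ mixes collided and non-collided coordinates inside a single nonlinear function, so I cannot simply pull out the collision contribution additively. I expect to handle this by conditioning on the subset $Q$ of collided indices, using convexity in each collided coordinate separately to replace it by its class-mean (which produces a zero-margin entry that only \emph{increases} $\ell$ relative to its non-collided coordinates), and then bounding the extra cost by $s(f)$ class-by-class; careful union-bounding over subsets $Q$ then yields the constants $\alpha,\beta,\eta$ announced in the theorem.
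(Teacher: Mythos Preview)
Your overall three–step pipeline (Jensen on $\widehat f$, generalization, then an upper–bound decomposition of $L_{un}(f)$) is exactly the paper's route, and Steps~1–2 are essentially right. The real issue is Step~3, and you have already put your finger on the obstacle, but the fix you propose does not work.

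Concretely, on the upper–bound side you must show something of the form
\[
L_{un}(f)\ \le\ (1-\tau')\,L_{un}^{\neq}(f)\;+\;\tau_k\bigl(\ell_{|I^+|}(\vec 0)+O(s(f))\bigr),
\]
so that the $\ell(\vec 0)$–floor produced in Step~2 cancels. Your plan is to condition on the collided index set and then ``use convexity in each collided coordinate'' to replace the collided samples by class means. But Jensen/convexity gives $\E[\ell(\cdot)]\ge \ell(\E[\cdot])$, i.e.\ a \emph{lower} bound on the loss; this is the wrong direction for decomposing $L_{un}(f)$ from above. Replacing a collided coordinate by $0$ is likewise not monotone in the required direction (it can either raise or lower $\ell$ depending on the sign of that coordinate), so the ``zero-margin entry only increases $\ell$'' claim is not correct, and the subsequent ``extra cost bounded by $s(f)$'' step has nothing to attach to.

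The paper's resolution is a different, more elementary observation that you are missing: for both hinge and logistic loss, $\ell$ is coordinate–wise monotone and \emph{subadditive} over index subsets, namely for any partition $I_1\cup I_2=[k]$,
\[
\ell\bigl(\{v_i\}_{i\in I_1}\bigr)\ \le\ \ell\bigl(\{v_i\}_{i\in[k]}\bigr)\ \le\ \ell\bigl(\{v_i\}_{i\in I_1}\bigr)+\ell\bigl(\{v_i\}_{i\in I_2}\bigr).
\]
The right inequality, applied with $I_1=[k]\setminus I^+$ and $I_2=I^+$, cleanly splits $L_{un}(f)$ into a non-collided piece that is exactly $(1-\tau')L_{un}^{\neq}(f)$ and a purely collided piece (all $|I^+|$ negatives from class $c^+$); the latter minus $\ell_{|I^+|}(\vec 0)$ is then controlled by the $t$-negative version of Lemma~\ref{lemma:deviation}, yielding the $s(f)$ term. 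The left inequality is what makes Step~2 go through without your extra conditioning on ``mutually distinct negatives'': it lets you drop repeated negative classes down to the distinct set $Q$ and simultaneously extract the floor $\ell_{|I^+|}(\vec 0)$ on the collided event. Once you insert this monotonicity/subadditivity of $\ell$, the rest of your outline goes through verbatim and produces the constants $\alpha=\tfrac{1-\tau'}{1-\tau_k}$, $\beta=c'k\tfrac{\tau_1}{1-\tau_k}$, $\eta=\tfrac{1}{1-\tau_k}$.
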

where $L_{un}^{\neq}(f)$ and $Gen_M$ are extensions of the corresponding terms from Section~\ref{sec:power_of_framework} and $s(f)$ remains unchanged.
The formal statement of the theorem and its proof appears in Appendix~\ref{appdx:multiclass}.
The key differences from Theorem~\ref{thm:binary_theorem} are $\beta$ and the distribution of tasks in $\gL_{sup}$ that we describe below.
The {\em coefficient $\beta$} of $s(f)$ increases with $k$, e.g. when $\rho$ is uniform and $k\ll|\gC|$, $\beta\approx \frac{k}{|\gC|}$.

The {\em average supervised loss} that we bound is
\begin{equation*}
\gL_{sup}(\widehat{f}):=\mathop{\E}\limits_{\gT\sim \gD}\left[L_{sup}(\gT,\widehat{f})\right]
\end{equation*}
where $\gD$ is a distribution over tasks, defined as follows: sample $k+1$ classes $c^+,c_1^-,\dots,c_k^- \sim \rho^{k+1}$, conditioned on the event that $c^+$ does not also appear as a negative sample. Then, set $\gT$ to be the set of distinct classes in $\{c^+,c_1^-,\dots,c_k^-\}$.
$\gL_{sup}^\mu(\widehat{f})$ is defined by using $L^{\mu}_{sup}(\gT,\widehat{f})$.
\begin{remark}
Bounding $\gL_{sup}(\widehat{f})$ directly gives a bound for average $(k+1)$-wise classification loss $L_{sup}(\widehat{f})$ from Definition~\ref{def:sup_loss}, since $L_{sup}(\widehat{f})\leq \gL_{sup}(\widehat{f})/p$, where $p$ is the probability that the $k+1$ sampled classes are distinct.
For $k\ll|\gC|$ and $\rho$ $\approx$ uniform, these metrics are almost equal.
\end{remark}
We also extend our competitive bound from Section \ref{subsec:subgaussian} for the above $\widehat{f}$ in Appendix \ref{appdx:subgaussian_multiclass}.

\subsection{Effect of Excessive Negative Sampling}\label{subsec:k-way_effect}
The standard belief is that increasing the number of negative samples always helps, at the cost of increased computational costs.
In fact for Noise Contrastive Estimation (NCE) \cite{Gutmann:10}, which is invoked to explain the success of negative sampling, increasing negative samples has shown to provably improve the asymptotic variance of the learned parameters.
However, we find that such a phenomenon does not always hold for contrastive learning -- larger $k$ can hurt performance for the same inherent reasons highlighted in Section \ref{subsec:counter}, as we illustrate next.

When $\rho$ is close to uniform and the number of negative samples is $k = \Omega(|\gC|)$, frequent class collisions can prevent the unsupervised algorithm from learning the representation $f \in \gF$ that is optimal for the supervised problem.
In this case, owing to the contribution of $s(f)$ being high, a large number of negative samples could hurt. 
This problem, in fact, can arise even when the number of negative samples is much smaller than the number of classes.
For instance, if the best representation function $f \in \gF$ groups classes into $t$ ``clusters",\footnote{This can happen when $\gF$ is not rich enough.} such that $f$ cannot contrast well between classes from the same cluster, then $L^{\neq}_{un}$ will contribute to the unsupervised loss being high even when $k=\Omega(t)$.
We illustrate, by examples, how these issues can lead to picking suboptimal $\widehat f$ in Appendix~\ref{appdx:multiclass_examples}.
Experimental results in Figures \ref{fig:NvsK_vision} and \ref{fig:NvsK_nlp} also suggest that larger negative samples hurt performance beyond a threshold, confirming our suspicions.


\subsection{Blocks of Similar Points}\label{subsec:CURL}
Often a dataset consists of {\em blocks} of similar data instead of just pairs: a block consists of $x_0, x_1, \dots x_b$ that are i.i.d. draws from a class distribution $D_{c}$ for a class $c \sim \rho $.
In text, for instance, paragraphs can be thought of as a {\em block} of sentences sampled from the same latent class. How can an algorithm leverage this additional structure? 

We propose an algorithm that uses two blocks: one for positive samples $x, x^+_1, \dots, x^+_b$ that are i.i.d. samples from $c^+\sim\rho$ and another one of negative samples $x^-_{1}, \dots x^-_{b}$ that are i.i.d. samples from $c^-\sim\rho$.
Our proposed algorithm then minimizes the following loss:
\begin{equation}\label{eq:block_loss}
\begin{split}
&L_{un}^{block}(f) \coloneqq \\
&\ \ \mathop{\E}\left[ \ell\left(f(x)^T\left(\frac{\sum_i f(x^+_i)}{b} - \frac{\sum_{i} f(x^-_i)}{b}\right) \right) \right]
\end{split}
\end{equation}
To understand why this loss function make sense, recall that the connection between $L^\mu_{sup}$ and $L_{un}$ was made in Lemma \ref{lemma:jensen} by applying Jensen's inequality.
Thus, the algorithm that uses the average of the positive and negative samples in blocks as a proxy for the classifier instead of just one point each should have a strictly better bound owing to the Jensen's inequality getting tighter.
%
We formalize this intuition below. Let $\tau$ be as defined in Section \ref{sec:power_of_framework}. 
\begin{prop}\label{prop:blocks} $\forall f \in \gF$
\begin{align*}
L_{sup}(f) \leq \frac{1}{1-\tau}\left(L_{un}^{block}(f) - \tau\right)\leq \frac{1}{1-\tau}\left(L_{un}(f) - \tau\right)
\end{align*}
\end{prop}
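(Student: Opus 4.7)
The proposition contains two inequalities, and both follow by essentially two applications of Jensen's inequality, in opposite directions, applied to the convex loss $\ell$. The plan mirrors closely the proof of Lemma \ref{lemma:jensen}.

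\textbf{First inequality.} To show $L_{sup}(f)\le \frac{1}{1-\tau}(L_{un}^{block}(f)-\tau)$, the plan is to mimic the Jensen step from Lemma \ref{lemma:jensen} but applied to the \emph{block average} rather than to a single point. Conditioned on $x$, $c^+$, $c^-$, the block $x_1^+,\dots,x_b^+$ is i.i.d.\ from $\gD_{c^+}$, so $\E[\frac{1}{b}\sum_i f(x_i^+)\mid c^+]=\mu_{c^+}$, and analogously $\E[\frac{1}{b}\sum_i f(x_i^-)\mid c^-]=\mu_{c^-}$. Pushing the expectation over the block variables inside $\ell$ via Jensen (the same move labeled $(b)$ in the proof of Lemma \ref{lemma:jensen}) gives
\begin{equation*}
L_{un}^{block}(f)\ \ge\ \mathop{\E}\limits_{\substack{c^+,c^-\sim\rho^2 \\ x\sim \gD_{c^+}}}\bigl[\ell\bigl(f(x)^T(\mu_{c^+}-\mu_{c^-})\bigr)\bigr].
\end{equation*}
The right-hand side is exactly the quantity handled in steps $(c)$--$(d)$ of Lemma \ref{lemma:jensen}: splitting on $c^+=c^-$ (probability $\tau$, contributing $\ell(0)=1$) versus $c^+\neq c^-$ (probability $1-\tau$, contributing $L_{sup}^\mu(f)$) gives $L_{un}^{block}(f)\ge (1-\tau)L_{sup}^\mu(f)+\tau$, and then $L_{sup}(f)\le L_{sup}^\mu(f)$ and rearranging yields the first inequality.

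\textbf{Second inequality.} To show $L_{un}^{block}(f)\le L_{un}(f)$, the plan is to pair the positive and negative block samples and apply Jensen in the other direction. Observing that
\begin{equation*}
f(x)^T\Bigl(\tfrac{1}{b}\textstyle\sum_i f(x_i^+)-\tfrac{1}{b}\sum_i f(x_i^-)\Bigr)=\tfrac{1}{b}\sum_{i=1}^b f(x)^T\bigl(f(x_i^+)-f(x_i^-)\bigr),
\end{equation*}
convexity of $\ell$ gives $\ell(\tfrac{1}{b}\sum_i v_i)\le \tfrac{1}{b}\sum_i \ell(v_i)$ with $v_i=f(x)^T(f(x_i^+)-f(x_i^-))$. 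Taking expectation and using that for every $i$ the marginal distribution of $(x,x_i^+,x_i^-)$ is exactly $\gD_{sim}\times \gD_{neg}$ (since $(x,x_i^+)$ are conditionally i.i.d.\ draws from $\gD_{c^+}$ and $x_i^-$ from $\gD_{c^-}$ with $c^\pm\sim\rho$), each summand equals $L_{un}(f)$, so averaging yields $L_{un}^{block}(f)\le L_{un}(f)$. Since $\frac{1}{1-\tau}>0$, this immediately gives the second inequality of the proposition.

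\textbf{Main obstacle.} There is no real obstacle here; the content is essentially ``Jensen twice,'' exploiting that the block averages are unbiased estimators of the class means $\mu_{c^\pm}$ in one direction and of a pairwise difference $f(x_i^+)-f(x_i^-)$ in the other. The only care needed is to verify that the marginal distributions of the summands in the second step coincide with $\gD_{sim}\times\gD_{neg}$, which is automatic from the i.i.d.\ block-sampling model introduced in Section~\ref{subsec:CURL}.
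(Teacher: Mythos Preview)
Your proposal is correct and follows essentially the same approach as the paper: the paper proves $L_{un}^{block}(f)\le L_{un}(f)$ by exactly your convexity/Jensen argument on the average $\tfrac{1}{b}\sum_i f(x)^T(f(x_i^+)-f(x_i^-))$, and for the first inequality it simply states that ``the proof of the lower bound is analogous to that of Lemma~\ref{lemma:jensen},'' which is precisely the Jensen-to-the-means step you spell out.
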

This bound tells us that $L_{un}^{block}$ is a better surrogate for $L_{sup}$, making it a more attractive choice than $L_{un}$ when larger blocks are available.\footnote{Rigorous comparison of the generalization errors is left for future work.}.
The algorithm can be extended, analogously to Equation~(\ref{eq:QT}), to handle more than one negative block.
Experimentally we find that minimizing $L_{un}^{block}$ instead of $L_{un}$ can lead to better performance and our results are summarized in Section \ref{exp_CURL}.
We defer the proof of Proposition \ref{prop:blocks} to Appendix \ref{appdx:proof_blocks}.


\section{Related Work}\label{sec:related}

The contrastive learning framework is inspired by several empirical works, some of which were mentioned in the introduction.
The use of co-occurring words as semantically similar points and negative sampling for learning word embeddings was introduced in \citet{Mikolov:13}.
Subsequently, similar ideas have been used by \citet{Logeswaran:18} and \citet{Pagliardini:18} for sentences representations and by \citet{Wang:15} for images.
Notably the sentence representations learned by the {\em quick thoughts (QT)} method in \citet{Logeswaran:18} that we analyze has state-of-the-art results on many text classification tasks.
Previous attempts have been made to explain negative sampling \cite{Dyer:14} using the idea of Noise Contrastive Estimation (NCE) \cite{Gutmann:10} which relies on the assumption that the data distribution belongs to some known parametric family.
This assumption enables them to consider a broader class of distributions for negative sampling.
The mean classifier that appears in our guarantees is of significance in meta-learning and is a core component of ProtoNets \cite{Snell:17}.

Our data model for similarity is reminiscent of the one in {\em co-training} \cite{Blum:98}. They assume access to pairs of \textquotedblleft views\textquotedblright~with the same label that are conditionally independent given the label.
Our unlabeled data model can be seen as a special case of theirs, where the two views have the same conditional distributions.
However, they additionally assume access to some labeled data (semi-supervised), while we learn representations using only unlabeled data, which can be subsequently used for classification when labeled data is presented.
{\em Two-stage kernel learning}~\cite{Cortes:10,Kumar:12} is similar in this sense: in the first stage, a positive linear combination of some base kernels is learned and is then used for classification in the second stage; they assume access to labels in both stages.
{\em Similarity/metric learning} \cite{Bellet:12, Bellet:13} learns a linear feature map that gives low distance to similar points and high to dissimilar.
While they identify dissimilar pairs using labels, due to lack of labels we resort to negative sampling and pay the price of class collision.
While these works analyze linear function classes, we can handle arbitrarily powerful representations.
Learning of representations that are broadly useful on a distribution of tasks is done in {\em multitask learning}, specifically in the {\em learning-to-learn model} \cite{Maurer:16} but using labeled data.

Recently \citet{Hazan:16} proposed ``assumption-free" methods for representation learning via MDL/compression arguments,
but do not obtain any guarantees comparable to ours on downstream classification tasks. 
As noted by \citet{Arora:17b}, this compression approach has to preserve {\em all} input information (e.g. preserve every pixel of the image) which seems suboptimal.

\begin{table}[t!]\label{table:sup_mean}
\begin{center}
\begin{sc}
	\caption{
		Performance of supervised and unsupervised representations on  average $k$-wise classification tasks ({\sc avg-$k$}) and for comparison, on full multiclass ({\sc top-r}) which is not covered by our theory. Classifier can have a trained output layer ({\sc Tr}), or the mean classifier ({\sc $\mu$}) of Definition~\ref{def:avg_sup_mean}, with $\mu$-5  indicating the mean was computed using only 5 labeled examples. 
	}\label{table:sup_vs_unsup}
		\vskip 0.15in
		\fontsize{8.5}{10}\selectfont
		\begin{tabular}{@{}l@{\hspace{2mm}}|c|@{\hspace{1.5mm}}c@{\hspace{1.5mm}}c@{\hspace{1.5mm}}c@{\hspace{1.5mm}}|@{\hspace{1.5mm}}c@{\hspace{1.5mm}}c@{\hspace{1.5mm}}c@{}}
		    && \multicolumn{3}{c}{Supervised} & \multicolumn{3}{c}{Unsupervised}\\
		    && Tr & $\mu$ & $\mu$-5 & Tr & $\mu$ & $\mu$-5\\
			\toprule
			\multirow{4}*{Wiki-3029}
			& avg-2 & 97.8  & 97.7  & 97.0  & 97.3  & 97.7 & 96.9\\
			& avg-10 & 89.1 & 87.2 & 83.1 & 88.4 & 87.4 & 83.5\\
			\cmidrule{2-8}
			& top-10 & 67.4 & 59.0 & 48.2 & 64.7 & 59.0 & 45.8\\
			& top-1 & 43.2 & 33.2 & 21.7 & 38.7 & 30.4 & 17.0\\
						
			\midrule
			
			\multirow{4}*{CIFAR-100}
			& avg-2 & 97.2 & 95.9 & 95.8 & 93.2 & 92.0 & 90.6\\
			& avg-5 & 92.7 & 89.8 & 89.4 & 80.9 & 79.4 & 75.7\\
			\cmidrule{2-8}
			& top-5 & 88.9 & 83.5 & 82.5 & 70.4 & 65.6 & 59.0\\
			& top-1 & 72.1 & 69.9 & 67.3 & 36.9 & 31.8 & 25.0\\
			\bottomrule
		\end{tabular}
\end{sc}
\end{center}
\end{table}

\section{Experimental Results}\label{sec:experiment}
We report experiments in text and vision domains supporting our theory.
Since contrastive learning has already shown to obtain state-of-the-art results on text classification by {\em quick thoughts} (QT) in \citet{Logeswaran:18}, most of our experiments are conducted to corroborate our theoretical analysis.
We also show that our extension to similarity blocks in Section~\ref{subsec:CURL} can improve QT on a real-world task.

{\bf Datasets:} 
Two datasets were used in the controlled experiments. 
(1) The CIFAR-100 dataset~\cite{Krizhevsky:09} consisting of 32x32 images categorized into 100 classes with a 50000/10000 train/test split.
(2) Lacking an appropriate NLP dataset with large number of classes, we create the Wiki-3029 dataset, consisting of 3029 Wikipedia articles as the classes and 200 sentences from each article as samples.
The train/dev/test split is 70\%/10\%/20\%. 
To test our method on a more standard task,  we also use the unsupervised part of the IMDb review corpus \cite{Maas:11}, which consists of 560K sentences from 50K movie reviews. Representations trained using this corpus are evaluated on the supervised IMDb binary classification task, consisting of training and testing set with 25K reviews each.

\begin{figure*}[t!]
\begin{subfigure}[b]{0.32\linewidth}
    \includegraphics[width=\linewidth]{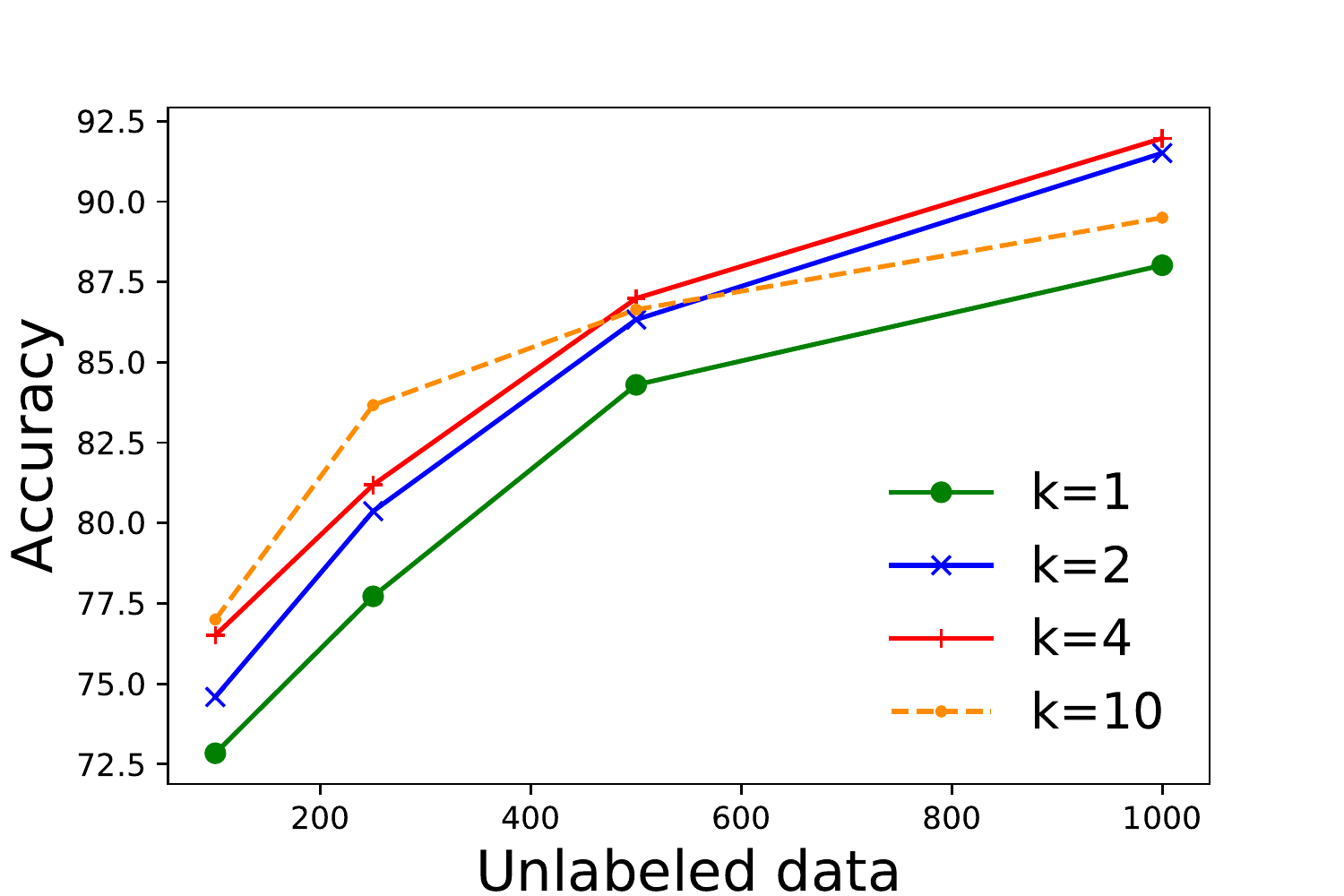}
    \caption{CIFAR-100}
    \label{fig:NvsK_vision}
  \end{subfigure}
  \hfill 
  \begin{subfigure}[b]{0.32\linewidth}
    \includegraphics[width=\linewidth]{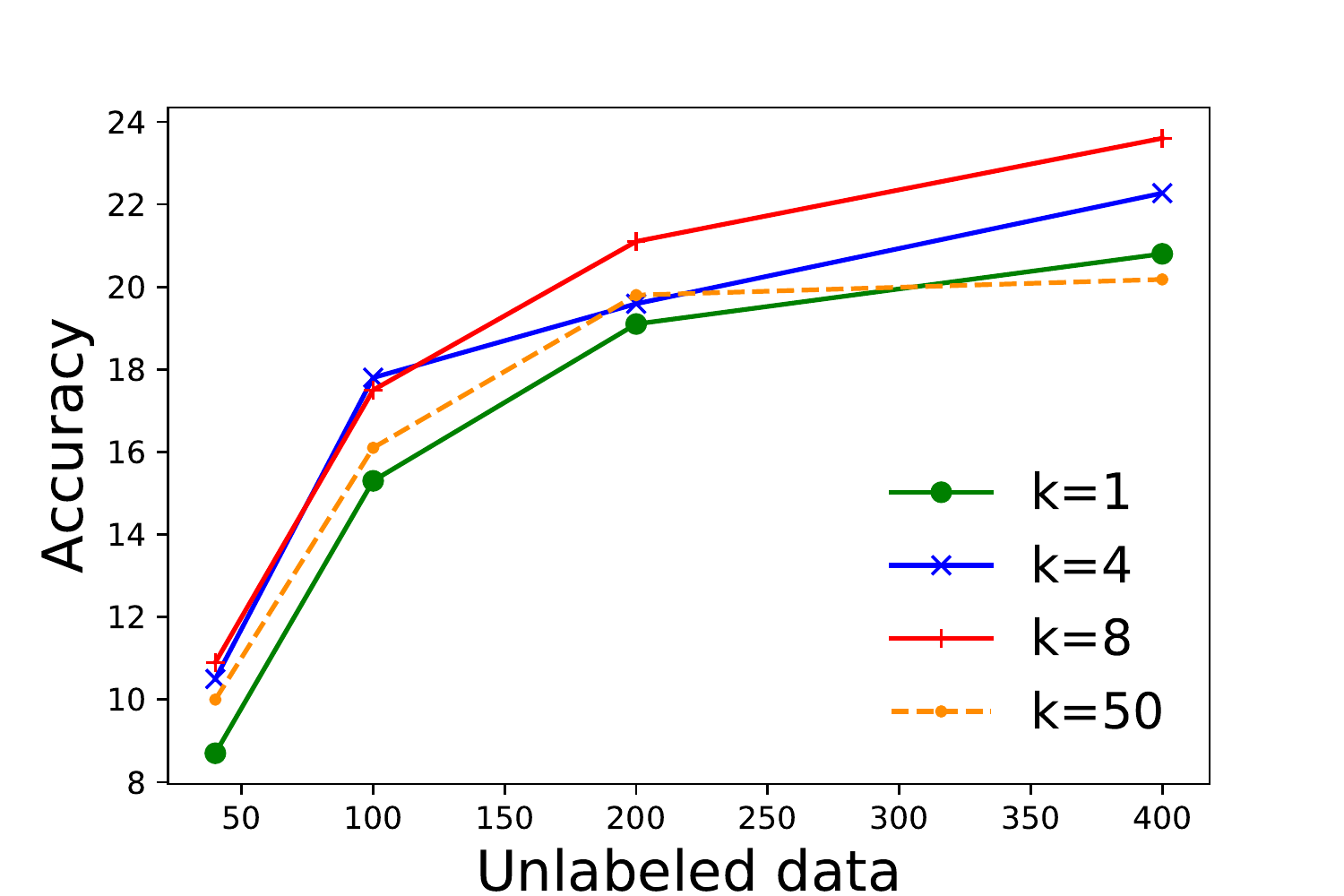}
    \caption{Wiki-3029}
    \label{fig:NvsK_nlp}
  \end{subfigure}
  \hfill 
  \begin{subfigure}[b]{0.32\linewidth}
    \includegraphics[width=\linewidth]{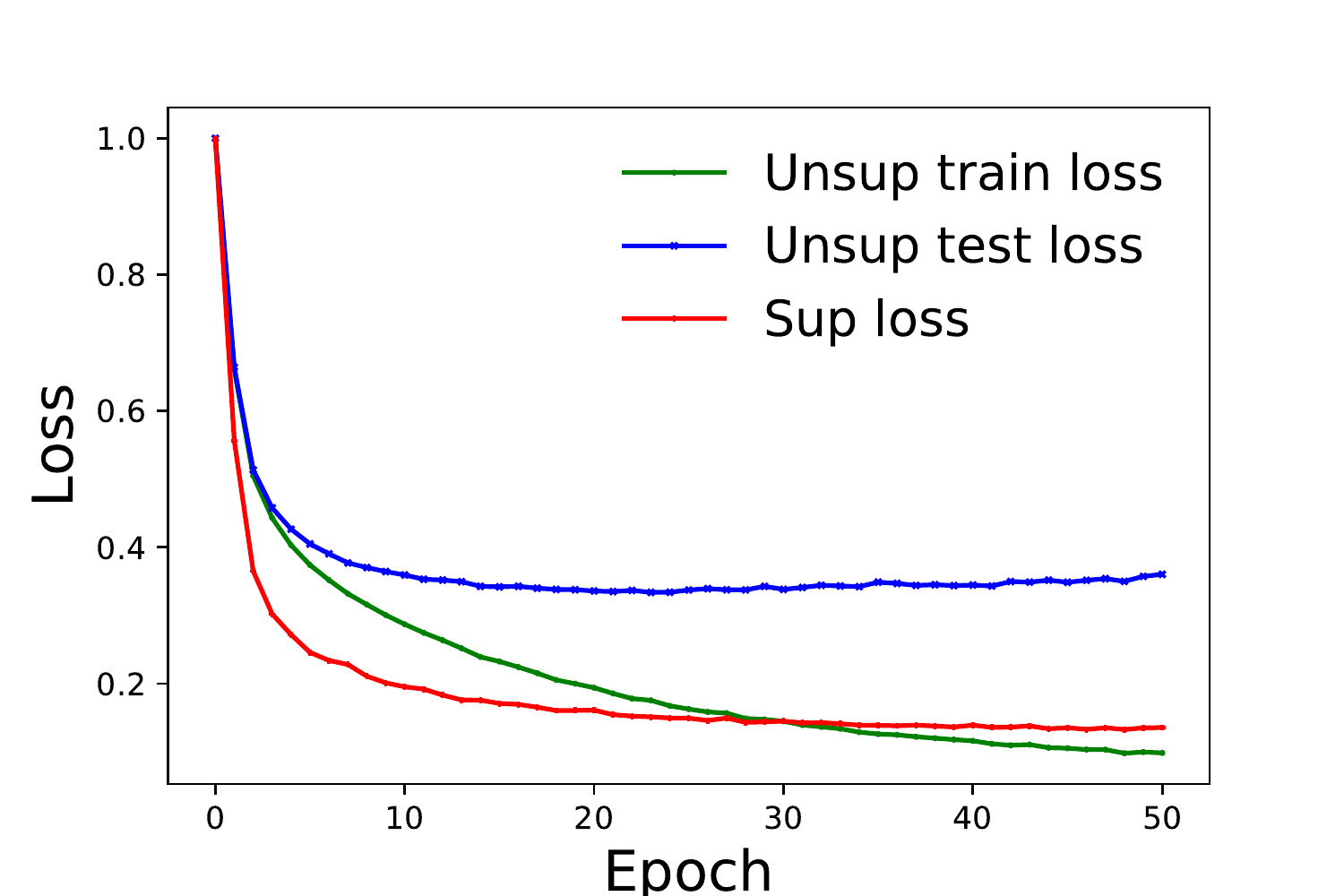}
    \caption{Wiki-3029}
    \label{fig:unsup_loss}
  \end{subfigure}
      \caption{Effect of amount of unlabeled data and \# of negative samples on unsupervised representations, measured on binary classification for CIFAR100 in (a) and on top-1 performance on Wiki-3029 in Fig (b) (top-1 performance is used because avg binary was same for all $k$). Fig. (c) shows the dynamics of train/test loss;  supervised loss roughly tracks unsupervised test loss, as suggested by Theorem~\ref{thm:unsup_upper_bound}}\label{fig:NvsK}
\end{figure*}

\begin{table}[t!]
\begin{center}
\begin{sc}
	\centering
	\caption{Effect of larger block size on representations. For {\sc CIFAR-100} and {\sc Wiki-3029} we measure the average binary classification accuracy. {\sc IMDb} representations are tested on {\sc IMDb} supervised task. CURL is our large block size contrastive method, QT is the algorithm from \cite{Logeswaran:18}. For larger block sizes, QT uses all pairs within a block as similar pairs. We use the same GRU architecture for both CURL and QT for a fair comparison.}\label{table:block_size}
		\vskip 0.15in
		\fontsize{9}{10}\selectfont
		\begin{tabular}{@{}l|c|ccc@{}}
			\toprule
		    	Dataset & Method & $b=2$ & $b=5$ & $b=10$ \\
			\midrule
			\multirow{1}*{CIFAR-100}
			 & CURL & 88.1 & 89.6 & 89.7\\
			\multirow{1}*{Wiki-3029}
			 & CURL & 96.6 & 97.5 & 97.7\\
			 \midrule
			\multirow{2}*{IMDb}
			& CURL & 89.2 & 89.6 & 89.7 \\
			& QT & 86.5 & 87.7 & 86.7\\
			\bottomrule
		\end{tabular}
\end{sc}
\end{center}
\end{table}

\subsection{Controlled Experiments}\label{sec:controlled_exp}
To simulate the data generation process described in Section~\ref{sec:framework}, we generate similar pairs (blocks) of data points by sampling from the same class.
Dissimilar pairs (negative samples) are selected randomly.
Contrastive learning was done using our objectives (\ref{eq:QT}), and compared to performance of standard supervised training, with both using the {\em same architecture} for representation $f$.
For CIFAR-100 we use VGG-16 \cite{Simonyan:14} with an additional 512x100 linear layer added at the end to make the final representations 100 dimensional, while for Wiki-3029 we use a Gated Recurrent Network (GRU) \cite{Chung:15} with output dimension 300 and fix the word embedding layer with pretrained GloVe embeddings \cite{Pennington:14}.
The unsupervised model for CIFAR-100 is trained with 500 blocks of size 2 with 4 negative samples, and for Wiki-3029 we use 20 blocks of size 10 with 8 negative samples.
We test (1) learned representations on average tasks by using the mean classifier and compare to representations trained using labeled data; (2) the effect of various parameters like amount of unlabeled data ($N$)\footnote{If we used $M$ similar blocks of size $b$ and $k$ negative blocks for each similar block, $N=Mb(k+1)$. In practice, however, we reuse the blocks for negative sampling and lose the factor of $k+1$.}, number of negative samples ($k$) and block size ($b$) on representation quality; (3)
whether the supervised loss tracks the unsupervised loss as suggested by Theorem~\ref{thm:unsup_upper_bound}; (4) performance of the mean classifier of the supervised model.

{\bf Results:}
These appear in Table~\ref{table:sup_vs_unsup}. For Wiki-3029 the unsupervised performance is very close to the supervised performance in all respects, while for CIFAR-100 the {\em avg-$k$} performance is respectable, rising to good for binary classification.
One surprise is that the mean classifier, central to our analysis of unsupervised learning, performs well also with representations learned by supervised training on CIFAR-100.
Even the mean computed by just $5$ labeled samples performs well, getting within $2\%$ accuracy of the $500$ sample mean classifier on CIFAR-100.
This suggests that representations learnt by standard supervised deep learning are actually quite concentrated.
We also notice that the supervised representations have fairly low unsupervised training loss (as low as 0.4), even though the optimization is minimizing a different objective. 

To measure the sample complexity benefit provided by contrastive learning, we train the supervised model on just $10\%$ fraction of the dataset and compare it with an unsupervised model trained on unlabeled data whose mean classifiers are computed using the same amount of labeled data.
We find that the unsupervised model beats the supervised model by almost $4\%$ on the 100-way task and by $5\%$ on the average binary task when only 50 labeled samples are used.

Figure \ref{fig:NvsK} highlights the positive effect of increasing number of negative samples as well as amount of data used by unsupervised algorithm. 
In both cases, using a lot of negative examples stops helping after a point, confirming our suspicions in Section~\ref{subsec:k-way_effect}.
We also demonstrate how the supervised loss tracks unsupervised test loss in Figure~\ref{fig:unsup_loss}.

\subsection{Effect of Block Size}\label{exp_CURL}

As suggested in Section~\ref{subsec:CURL}, a natural extension to the model would be access to blocks of similar points.
We refer to our method of minimizing the loss in (\ref{eq:block_loss}) as {\em CURL} for {\em Contrastive Unsupervised Representation Learning} and perform experiments on CIFAR-100, Wiki-3029, and IMDb.
In Table~\ref{table:block_size} we see that for CIFAR-100 and Wiki-3029, increasing  block size yields an improvement in classification accuracy.
For IMDb, as is evident in Table~\ref{table:block_size}, using larger blocks provides a clear benefit and the method does better than QT, which has state-of-the-art performance on many tasks.
A thorough evaluation of CURL and its variants on other unlabeled datasets is left for future work.



\section{Conclusion}\label{sec:conclusion}

Contrastive learning methods have been empirically successful at learning useful feature representations.
We provide a new conceptual framework for thinking about this form of learning, which also allows us to formally treat issues such as guarantees on the quality of the learned representations.
The framework gives fresh insights into what guarantees are possible and impossible, and shapes the search for new assumptions to add to the framework that allow tighter guarantees.
The framework currently ignores issues of efficient minimization of various loss functions, and instead studies the interrelationships of their minimizers as well as sample complexity requirements for training to generalize, while clarifying what generalization means in this setting.
Our approach should be viewed as a first cut; possible extensions include allowing tree structure -- more generally metric structure -- among the latent classes. Connections to meta-learning and transfer learning may arise.

We use experiments primarily to illustrate and support the new framework. But one experiment on sentence embeddings already illustrates how fresh insights derived from our framework can lead to improvements upon state-of-the-art models in this active area. We hope that further progress will follow, and that our theoretical insights will begin to influence practice, including design of new heuristics to identify semantically similar/dissimilar pairs.

\section{Acknowledgements}
This work is supported by NSF, ONR, the Simons Foundation, the Schmidt Foundation, Mozilla Research, Amazon Research, DARPA, and SRC.
We thank Rong Ge, Elad Hazan, Sham Kakade, Karthik Narasimhan, Karan Singh and Yi Zhang for helpful discussions and suggestions.

\bibliography{references}

\bibliographystyle{icml2019}

\appendix

\onecolumn
\section{Deferred Proofs}
\subsection{Class Collision Lemma}
We prove a general Lemma, from which Lemma \ref{lemma:deviation} can be derived directly.


\begin{lemma}\label{lemma:general_deviation}
Let $c\in \gC$ and $\ell:\mathbb{R}^t\rightarrow \mathbb{R}$ be either the $t$-way hinge loss or $t$-way logistic loss, as defined in Section \ref{sec:framework}. 
Let $x,x^+,x_1^-,...,x_t^-$ be iid draws from $\gD_c$. For all $f\in \gF$, let 
$$L_{un,c}^=(f)=\mathop{\E}\limits_{x,x^+,x_i^-}\left[\ell \left(\left\{f(x)^T\left(f(x^+)-f(x_i^-)\right)\right\}_{i=1}^t\right)\right]$$ 
Then
\begin{equation}\label{eq:fixed_deviation}
 L_{un,c}^=(f)-\ell(\vec{0})\leq c' t \sqrt{\|\Sigma(f,c)\|_2}\mathop{\E}\limits_{x\sim \gD_c}[\|f(x)\|]
\end{equation}
where $c'$ is a positive constant.
\end{lemma}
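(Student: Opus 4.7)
The plan is to exploit two structural facts: (i) both the $t$-way hinge and logistic losses are Lipschitz at $\vec{0}$ in the $\ell_\infty$ sense (with a constant independent of $t$), and (ii) once we condition on $x$, the vector $f(x^+)-f(x_i^-)$ has mean zero since $x^+$ and $x_i^-$ are i.i.d.\ from $\gD_c$, and its covariance is $2\Sigma(f,c)$. Combining these lets us upper bound the excess loss $L_{un,c}^=(f)-\ell(\vec{0})$ by the absolute first moment of each coordinate of the argument of $\ell$, which in turn is controlled by $\|\Sigma(f,c)\|_2$ and the expected length of $f(x)$.

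Concretely, I would first verify that there is a universal constant $L$ such that for every $v\in\R^t$, $|\ell(v)-\ell(\vec{0})|\le L\,\|v\|_\infty\le L\sum_{i=1}^t |v_i|$. For the hinge loss this is immediate from the formula $\ell(v)=\max\{0,1+\max_i(-v_i)\}$. For the logistic loss it follows from the uniform bound $\sum_j |\partial_j\ell(v)| = \frac{\sum_j e^{-v_j}}{\ln 2\,(1+\sum_j e^{-v_j})}\le 1/\ln 2$, which gives $\ell_\infty$-Lipschitzness with constant $1/\ln 2$ independent of $t$. Writing $v_i = f(x)^T(f(x^+)-f(x_i^-))$ and applying this bound inside the outer expectation yields
\[
L_{un,c}^=(f) - \ell(\vec{0}) \;\le\; L\sum_{i=1}^t \E\big[|f(x)^T(f(x^+)-f(x_i^-))|\big].
\]

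For each $i$, I would condition on $x$ and pass from the first to the second moment via Jensen:
\[
\E_{x^+,x_i^-}\big[|f(x)^T(f(x^+)-f(x_i^-))|\big] \;\le\; \sqrt{\E_{x^+,x_i^-}\big[\big(f(x)^T(f(x^+)-f(x_i^-))\big)^2\big]}.
\]
Since $x^+$ and $x_i^-$ are independent draws from $\gD_c$, the inner expectation equals $f(x)^T\,\E[(f(x^+)-f(x_i^-))(f(x^+)-f(x_i^-))^T]\,f(x) = 2\,f(x)^T\Sigma(f,c)f(x) \le 2\,\|\Sigma(f,c)\|_2\,\|f(x)\|^2$. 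Taking the outer expectation over $x$ gives
\[
\E\big[|f(x)^T(f(x^+)-f(x_i^-))|\big] \;\le\; \sqrt{2\,\|\Sigma(f,c)\|_2}\;\E_{x\sim\gD_c}\|f(x)\|,
\]
and summing the identical bound over the $t$ indices yields the claim with $c' = L\sqrt{2}$.

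The only mildly delicate point is ensuring the Lipschitz constant in the first step is independent of $t$, so that the factor of $t$ in the statement is genuine and not hiding a $\log t$ (logistic) or worse. The computation above confirms this. Everything else is routine: the passage from first to second moment is Cauchy--Schwarz, and the second-moment identity $\E[(f(x^+)-f(x_i^-))(f(x^+)-f(x_i^-))^T] = 2\Sigma(f,c)$ is just independence plus the definition of covariance. Convexity of $\ell$ incidentally ensures $L_{un,c}^=(f)-\ell(\vec 0)\ge 0$ via Jensen applied to the inner expectation, so the inequality is meaningful in the direction we need.
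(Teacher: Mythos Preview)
Your proposal is correct and follows essentially the same route as the paper: bound $L_{un,c}^=(f)-\ell(\vec 0)$ by (a constant times) $\E[\max_i|v_i|]$, pass to $t\,\E[|v_1|]$, then condition on $x$ and use Cauchy--Schwarz together with the identity $\E[(f(x^+)-f(x_i^-))(f(x^+)-f(x_i^-))^T]=2\Sigma(f,c)$. The only cosmetic difference is that the paper handles hinge and logistic by separate elementary inequalities, whereas you unify them via the $\ell_\infty$-Lipschitz bound $\|\nabla\ell\|_1\le 1/\ln 2$; both yield the same constant and the same structure.
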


Lemma \ref{lemma:deviation} is a direct consequence of the above Lemma, by setting $t=1$ (which makes $\ell(0)=1$), taking an expectation over $c \sim \nu$ in \Eqref{eq:fixed_deviation} and noting that $\mathop{\E}_{c \sim \nu}[L_{un,c}^=(f)] = L_{un}^=(f)$. 

\begin{proof}[Proof of Lemma \ref{lemma:general_deviation}]
Fix an $f\in \gF$ and let $z_i=f(x)^T\left(f(x_i^-)-f(x^+)\right)$ and $z=\max_{i\in [t]}{z_i}$. First, we show that  $L_{un,c}^=(f)-\ell(\vec{0})\leq c' \mathbb{E}[|z|]$, for some constant $c'$. Note that $\mathbb{\E}[|z|]=\mathbb{P}[z\geq 0]\mathbb{E}[z| z\geq 0]+\mathbb{P}[z\leq 0]\mathbb{E}[-z | z\leq 0]\geq \mathbb{P}[z\geq 0]\mathbb{E}[z| z\geq 0]$.

{\bf $t$-way hinge loss}: By definition $\ell(\vv)=\max\{0,1+\max_{i\in[t]}\{-\vv_i\}\}$. Here, $L_{un,c}^=(f) =\mathbb{E}[(1+z)_+] \leq \mathbb{E}[\max\{1+z,1\}]=1+\mathbb{P}[z\geq 0]\mathbb{E}[z|z\geq 0]\leq 1+ \mathbb{\E}[|z|]$.

{\bf $t$-way logistic loss}: By definition $\ell(\vv)=\log_2(1+\sum_{i=1}^te^{-\vv_i})$, we have $L_{un,c}^=(f) =\mathbb{E}[\log_2(1+\sum_{i}e^{z_i})]\leq \mathbb{E}[\log_2(1+ te^{z})]\leq \max\{\frac{z}{\log 2}+\log_2(1+t),\log_2(1+t) \}=\frac{\mathbb{P}[z\geq 0]\mathbb{E}[z|z\geq 0]}{\log{2}}+\log_2(1+t)\leq\frac{ \mathbb{\E}[|z|]}{\log{2}}+\log_2(1+t)$. 

Finally, $\mathbb{E}[|z|]\leq \mathbb{E}[\max_{i\in [t]}|z_i|]\leq t \mathbb{E}[|z_1|]$. But, 
\begin{equation*} 
\begin{split}
\mathbb{E}[|z_1|]&=\mathbb{E}_{x,x^+,x_1^-}\left[\ \left|f(x)^T\left(f(x_1^-)-f(x^+)\right)\right|\ \right]\\
&\leq\mathbb{E}_{x}\left[\|f(x)\|\sqrt{\mathbb{E}_{x^+,x_1^-}\left[\ \left(\frac{f(x)^T}{\|f(x)\|}\left(f(x_1^-)-f(x^+)\right)\right)^2\ \right]}\right]\leq\sqrt{2}\ \sqrt{\|\Sigma(f,c)\|_2}\mathop{\E}\limits_{x\sim \gD_c}[\|f(x)\|]
\end{split}
\end{equation*}

\end{proof}

\subsection{Proof of Lemma \ref{lemma:subgaussian}}\label{appdx:proof_subgaussian}
Fix an $f\in \gF$ and suppose that within each class $c$, $f$ is $\sigma^2$-subgaussian in every direction. 
\footnote{A random variable X is called $\sigma ^2$-subgaussian if $\mathop{\E}[e^{\lambda (X-\mathbb{E}[X])}]\leq e^{\lambda^2\sigma^2/2}$, $\forall \lambda \in \mathbb{R}$. A random vector $V \in \R^d$ is $\sigma^2$-subgaussian in every direction, if  $\forall u \in \R^d, ||u|| = 1$, the random variable $\langle u,V \rangle$ is $\sigma^2$-subgaussian.}
Let $\mu_c=\mathop{\E}\limits_{x\sim \gD_c}[f(x)]$. 
This means that for all $c\in\gC$ and unit vectors $v$, for $x \sim D_c$, we have that $v^T(f(x)-\mu_c)$ is $\sigma^2$-subgaussian. Let $\epsilon>0$ and $\gamma=1+2R\sigma\sqrt{2\log{R}+\log{3/\epsilon}}$.  $ \ \footnote{We implicitly assume here that $R\geq1$, but for $R<1$, we just set $\gamma=1+2R\sigma\sqrt{\log{3/\epsilon}}$ and the same argument holds.}$
 Consider fixed $c^+,c^-,x$ and let $f(x)^T(f(x^-)-f(x^+))= \mu+z$, where 
 $$ \mu=f(x)^T(\mu_{c^-}-\mu_{c^+}) \qquad \text{and} \qquad z=f(x)^T\left(f(x^-)-\mu_{c^-}\right)-f(x)^T\left(f(x^+)-\mu_{c^+}\right)$$
For $x^+\sim \gD_c^+$,  $x^-\sim \gD_c^-$ independently, z is the sum of two independent $R^2\sigma^2$-subgaussians ($x$ is fixed), so z is $2R^2\sigma^2$-subgaussian and thus $p=\Pr[z\geq\gamma-1] \leq e^{-\frac{4R^2\sigma^2(2\log{R}+\log{3/\epsilon})}{4R^2\sigma^2}}=\frac{\epsilon}{3R^2}$. So, $\mathop{\E}_{z}[(1+\mu+z)_+]\leq (1-p)(\gamma+\mu)_+ +p(2R^2+1) \leq \gamma(1+ \frac{\mu}{\gamma})_++\epsilon$ (where we used that $\mu+z\leq 2R^2$). By taking expectation over $c^+,c^-\sim \rho^2$, $x\sim \gD_{c^+}$ we have 
\begin{equation}\label{eq:subgaussian_symmetrization}
\begin{split}
L_{un}^{\neq}(f) &\leq \mathop{\E}\limits_{\substack{c^+,c^-\sim\rho^2\\ x\sim \gD_{c^+}}}\left[\gamma\left(1+\frac{f(x)^T(\mu_{c^-}-\mu_{c^+})}{\gamma}\right)_+\bigg{|} c^+\neq c^-\right]+\epsilon \\
&=  \gamma \mathop{\E}\limits_{c^+,c^-\sim\rho^2 }\left[\frac{1}{2} \mathop{\E}\limits_{x\sim \gD_{c^+}}\left[ \left(1+\frac{f(x)^T(\mu_{c^-}-\mu_{c^+})}{\gamma}\right)_+\right]+\frac{1}{2} \mathop{\E}\limits_{ x\sim \gD_{c^-}}\left[ \left(1+\frac{f(x)^T(\mu_{c^+}-\mu_{c^-})}{\gamma}\right)_+\right] \bigg{|} c^+\neq c^- \right]+\epsilon\\
&=\gamma \mathop{\E}\limits_{c^+,c^-\sim\rho^2 }\left[ L_{\gamma,sup}^\mu(\{c^+,c^-\},f)\big{|}c^+\neq c^- \right] +\epsilon
\end{split}
\end{equation}
where $L_{\gamma,sup}^\mu(\{c^+,c^-\},f)$ is  $L_{sup}^\mu(\{c^+,c^-\},f)$ when $\ell_\gamma(x)=(1-x/\gamma)_+$ is the loss function. Observe that in \ref{eq:subgaussian_symmetrization} we used that $\gD_{\gT}$ are uniform for binary $\gT$, which is an assumption we work with in section 4, but we remove it in section 5. The proof finishes by observing that the last line in \ref{eq:subgaussian_symmetrization} is equal to $\gamma L_{\gamma,sup}^\mu(f)+\epsilon$.

 \qed

\subsection{Generalization Bound}\label{appdx:gen_bound}
We first state the following general Lemma in order to bound the generalization error of the function class $\gF$ on the unsupervised loss function $L_{un}(\cdot)$. Lemma \ref{lemma:gen_bound} can be directly derived from it.
\begin{lemma}\label{generalization}
Let $\ell:\mathbb{R}^k\rightarrow\mathbb{R}$ be $\eta$-Lipschitz and bounded by $B$. Then with probability at least $1-\delta$ over the training set $\gS=\{(x_j,x_j^+,x_{j1}^-,\dots,x_{jk}^-)\}_{j=1}^M$, for all $f\in \gF$
\begin{equation}
L_{un}(\hat{f})\leq  L_{un}(f)+O \left(\frac{\eta R \sqrt{k} \mathcal{R}_\gS(\gF)}{M}+  B\sqrt{\frac{\log{\frac{1}{\delta}}}{M}} \right)
\end{equation}
where
\begin{equation}
\gR_\gS(\gF)=\mathop{\E}\limits_{\sigma \sim \{\pm1\}^{(k+2)dM}} \left[ \sup_{f\in \gF} \langle \sigma, f_{|\gS} \rangle \right]
\end{equation}
and  $f_{|\gS}=\left(f_t(x_j),f_t(x_j^+),f_t(x_{j1}^-),\dots, ,f_t(x_{jk}^-)\right)_{\substack{j\in[M] \\ t \in [d]}} $
\end{lemma}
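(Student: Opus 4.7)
The plan is to establish uniform convergence of $\widehat L_{un}(f)$ to $L_{un}(f)$ over $\gF$ via the standard three-stage Rademacher-complexity argument, specialized to the bilinear structure of the contrastive loss. Once uniform convergence is shown, the bound in the statement follows from the ERM inequality $L_{un}(\widehat f)\le\widehat L_{un}(\widehat f)+\epsilon\le\widehat L_{un}(f)+\epsilon\le L_{un}(f)+2\epsilon$ for every $f\in\gF$.

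First I would apply McDiarmid's bounded-differences inequality to $\sup_{f\in\gF}\bigl(L_{un}(f)-\widehat L_{un}(f)\bigr)$. Since $\ell$ is bounded by $B$, each of the $M$ i.i.d.\ tuples can alter the empirical loss by at most $2B/M$, producing a high-probability deviation of order $B\sqrt{\log(1/\delta)/M}$. Standard symmetrization then bounds the remaining expected supremum by $(2/M)\,\mathop{\E}_\gS\widehat{\mathcal{R}}(\ell\circ\gF)$, with $\widehat{\mathcal{R}}(\ell\circ\gF)=\mathop{\E}_\sigma\sup_{f\in\gF}\sum_{j=1}^M \sigma_j\,\ell\bigl(\{f(x_j)^T(f(x_j^+)-f(x_{ji}^-))\}_{i=1}^k\bigr)$.

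Next, I would invoke Maurer's vector-contraction inequality to peel off the loss $\ell$. Because $\ell:\R^k\to\R$ is $\eta$-Lipschitz and the ``prediction'' map $g_f(z_j)\in\R^k$ with coordinates $f(x_j)^T(f(x_j^+)-f(x_{ji}^-))$ is genuinely vector-valued, Maurer's inequality gives
\[
\widehat{\mathcal{R}}(\ell\circ\gF)\;\le\;\sqrt{2}\,\eta\;\mathop{\E}_{\sigma\in\{\pm1\}^{Mk}}\sup_{f\in\gF}\sum_{j=1}^M\sum_{i=1}^k \sigma_{ji}\,f(x_j)^T\bigl(f(x_j^+)-f(x_{ji}^-)\bigr),
\]
reducing the problem to bounding this bilinear Rademacher quantity, which I denote $A(\gS)$.

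Finally, I would show $A(\gS)\le c\,R\sqrt{k}\,\mathcal{R}_\gS(\gF)$ by exploiting the uniform norm bound $\|f(\cdot)\|\le R$. Using symmetry of the Rademacher signs to split the difference of inner products at no cost, I would treat each side of the bilinear form in turn: fixing $f(x_j)$ as a coefficient vector of norm at most $R$ linearizes $f(x_j)^T f(x_j^+)$ (or $f(x_j)^T f(x_{ji}^-)$) into a signed linear combination over the $d$ coordinates of $f$ at a sample point, which is precisely what the paper's Rademacher sum $\mathcal{R}_\gS(\gF)=\mathop{\E}_\sigma\sup_f\langle\sigma,f_{|\gS}\rangle$ over the $(k{+}2)dM$ coordinate-sample pairs controls. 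The $\sqrt{k}$ factor enters by applying a Khintchine/Cauchy--Schwarz moment bound to the $k$-fold Rademacher sum over the independent negative samples tied to each positive pair. The decisive obstacle is precisely this third step: because $f$ appears on both sides of the inner product, the ordinary Ledoux--Talagrand contraction---which handles Lipschitz compositions with a \emph{single}-argument function class---does not apply directly, and one must carefully use the norm bound to linearize one side at a time without losing additional factors of $M$ or $d$. Assembling the three stages produces an overall rate $\tfrac{\sqrt{2}\,\eta\cdot c\,R\sqrt{k}\,\mathcal{R}_\gS(\gF)}{M}+B\sqrt{\tfrac{2\log(1/\delta)}{M}}$, matching the claimed bound.
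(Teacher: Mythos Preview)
Your overall architecture---bounded-differences concentration, symmetrization, and a contraction step to reach $\mathcal{R}_\gS(\gF)$---matches the paper's. The divergence is entirely in how the contraction step is organized, and your version of that step has a real gap.

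You apply Maurer's vector-contraction to peel off only $\ell$, leaving the bilinear Rademacher quantity $A(\gS)=\mathop{\E}_\sigma\sup_{f}\sum_{j,i}\sigma_{ji}\,f(x_j)^T\bigl(f(x_j^+)-f(x_{ji}^-)\bigr)$, and then propose to bound $A(\gS)$ by $cR\sqrt{k}\,\mathcal{R}_\gS(\gF)$ via ``linearizing one side at a time.'' This is exactly where the argument breaks: the supremum is over a \emph{single} $f$ that simultaneously determines $f(x_j)$, $f(x_j^+)$, and $f(x_{ji}^-)$, so you cannot freeze $f(x_j)$ as a fixed coefficient vector while still taking a supremum over $f$ on the other factor. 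Decoupling the two occurrences of $f$ in a bilinear form under a joint supremum is not a routine step, and nothing in your sketch (Khintchine, Cauchy--Schwarz over the $k$ negatives) addresses it.

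The paper sidesteps this issue entirely by applying Maurer's contraction \emph{once}, to the composite map. It sets $\tilde f(z)=\bigl(f(x),f(x^+),f(x_1^-),\dots,f(x_k^-)\bigr)\in\mathbb{R}^{(k+2)d}$ and writes $g_f=\tfrac{1}{B}\,\ell\circ\phi\circ\tilde f$, where $\phi:\mathbb{R}^{(k+2)d}\to\mathbb{R}^k$ is the fixed map $\phi\bigl(v,v^+,v_1^-,\dots,v_k^-\bigr)=\bigl(v^T(v^+-v_i^-)\bigr)_{i\in[k]}$. On the domain where each block has norm at most $R$, a direct Jacobian computation gives $\|J\|_F\le\sqrt{6k}\,R$, so $\phi$ is $\sqrt{6k}\,R$-Lipschitz there, and hence $h=\tfrac{1}{B}\,\ell\circ\phi$ is $(\sqrt{6}\,\eta R\sqrt{k}/B)$-Lipschitz. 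A single application of Maurer's inequality to the class $\{\tilde f:f\in\gF\}$ then yields $\mathcal{R}_\gS(G)\le c\,\tfrac{\eta R\sqrt{k}}{B}\,\mathcal{R}_\gS(\gF)$ directly---the bilinear structure is absorbed into the Lipschitz constant of $h$, and the $\sqrt{k}$ factor falls out of the Jacobian bound rather than a separate moment argument. This is both simpler and avoids the decoupling problem you flag as ``the decisive obstacle.''
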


Note that for $k+1$-way classification, for hinge loss we have $\eta=1$ and $B=O(R^2)$, while for logistic loss $\eta=1$ and $B=O(R^2+\log{k})$. Setting $k=1$, we get Lemma \ref{lemma:gen_bound}.
We now prove Lemma \ref{generalization}.

\begin{proof}[Proof of Lemma \ref{generalization}]
First, we use the classical bound for the generalization error in terms of the Rademacher complexity of the function class (see \cite{Mohri:18} Theorem 3.1). For a real function class $G$ whose functions map from a set $Z$ to $[0,1]$ and for any $\delta>0$, if $\gS$ is a training set composed by $M$ iid samples $\{z_j\}_{j=1}^M$, then with probability at least $1-\frac{\delta}{2}$, for all $g\in G$
\begin{equation}
\mathop{\E}[g(z)]\leq \frac{1}{M}\sum_{j=1}^Mg(z_i)+\frac{2\mathcal{R}_\gS(G)}{M}+ 3\sqrt{\frac{\log{\frac{4}{\delta}}}{2M}}
\end{equation}
where $\mathcal{R}_\gS(G)$ is the usual Rademacher complexity.
We apply this bound to our case by setting $Z=\gX^{k+2}$, $\gS$ is our training set and the function class is 
\begin{equation}
G=\left\{g_f(x,x^+,x_{1}^-,...,x_{k}^-)=\frac{1}{B}\ell \left(\{f(x)^T\big(f(x^+)-f(x_i^-)\big)\}_{i=1}^k \right) \Big{|} f\in \gF \right\}
\end{equation}
We will show that for some universal constant c, $\mathcal{R}_{\gS}(G)\leq c \frac{\eta R \sqrt{k}}{B} \mathcal{R}_{\gS}(\gF)$ or equivalently
\begin{equation}\label{gen.bound_goal}
\mathop{\E}\limits_{\sigma \sim \{\pm 1\}^M} \left[  \sup_{f\in \gF} \left \langle \sigma, (g_f)_{|\mathcal{S}}\right \rangle \right] \leq c\frac{\eta R \sqrt{k}}{B} \mathop{\E}\limits_{\sigma \sim \{\pm 1\}^{d(k+2)M}} \left[  \sup_{f\in \gF} \left \langle \sigma, f_{|\mathcal{S}}\right \rangle \right]
\end{equation}
where $(g_f)_{|\mathcal{S}}=\{g_f(x_j,x_j^+,x_{j1}^-,...,x_{jk}^-)\}_{j=1}^M$.
 To do that we will use the following vector-contraction inequality. 
\\
\begin{theorem}\label{Maurer}{[Corollary 4 in \cite{Maurer:16vec}]}
Let $Z$ be any set, and $\mathcal{S}=\{z_j\}_{j=1}^M \in Z^M$. Let $\widetilde{\gF}$ be a class of functions $\tilde{f} :Z \rightarrow \mathbb{R}^n$ and $h : \mathbb{R}^n \rightarrow \mathbb{R}$ be L-Lipschitz. For all $\tilde{f}\in \widetilde{\gF}$, let $g_{\tilde{f}}=h\circ \tilde{f}$. Then
\begin{equation*}
\mathop{\E}\limits_{\sigma \sim \{\pm 1\}^M} \left[  \sup_{\tilde{f}\in \widetilde{\gF}} \left \langle \sigma, (g_{\tilde{f}})_{|\mathcal{S}}\right \rangle \right] \leq \sqrt{2} L \mathop{\E}\limits_{\sigma \sim \{\pm 1\}^{nM}} \left[ \sup_{\tilde{f}\in \widetilde{\gF}} \left \langle \sigma, \tilde{f}_{|\mathcal{S}}\right \rangle \right]
\end{equation*}
where $\tilde{f}_{|\mathcal{S}}=\left(\tilde{f}_t(z_j)\right)_{t\in[n],j\in [M] }$.
\end{theorem}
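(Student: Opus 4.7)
The plan is to prove this classical vector-contraction inequality by first establishing a single-coordinate lemma and then iterating it $M$ times. Throughout I may assume $L=1$ by replacing $h$ with $h/L$; the general case follows by rescaling.

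The \textbf{single-coordinate lemma} I aim to prove is: for any $\psi:\widetilde{\gF}\to\mathbb{R}$, any $\phi:\widetilde{\gF}\to\mathbb{R}^n$, any $1$-Lipschitz $h:\mathbb{R}^n\to\mathbb{R}$, an independent Rademacher $\sigma$, and an independent Rademacher vector $\sigma'\in\{\pm 1\}^n$,
\[
\E_\sigma\sup_{\tilde f}\bigl[\psi(\tilde f)+\sigma\,h(\phi(\tilde f))\bigr]\;\le\;\E_{\sigma'}\sup_{\tilde f}\bigl[\psi(\tilde f)+\sqrt{2}\,\langle\sigma',\phi(\tilde f)\rangle\bigr].
\]
My proof will symmetrize the single Rademacher by duplicating $\tilde f$: the LHS equals $\tfrac12\sup_{\tilde f,\tilde f'}[\psi(\tilde f)+\psi(\tilde f')+h(\phi(\tilde f))-h(\phi(\tilde f'))]$, and using symmetry in $(\tilde f,\tilde f')$ I may insert an absolute value to get $\tfrac12\sup_{\tilde f,\tilde f'}[\psi(\tilde f)+\psi(\tilde f')+|h(\phi(\tilde f))-h(\phi(\tilde f'))|]$. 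The Lipschitz property of $h$ then bounds this by $\tfrac12\sup_{\tilde f,\tilde f'}[\psi(\tilde f)+\psi(\tilde f')+\|\phi(\tilde f)-\phi(\tilde f')\|_2]$, and the sharp Khintchine--Kahane inequality $\|v\|_2\le\sqrt{2}\,\E_{\sigma'}|\langle\sigma',v\rangle|$ for $v\in\mathbb{R}^n$ lets me introduce the desired Rademacher vector. Pulling $\E_{\sigma'}$ outside the $\sup$ by Jensen, removing the absolute value using $(\tilde f,\tilde f')$-symmetry again, and then factoring the $\sup$ into separate pieces in $\tilde f$ and $\tilde f'$ (whose expectations coincide by $\sigma'\stackrel{d}{=}-\sigma'$) cancels the $\tfrac12$ and yields the lemma.

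I then iterate the lemma for $j=M,M-1,\dots,1$: at each step I condition on the remaining Rademacher variables and apply the lemma with $\phi(\tilde f)=\tilde f(z_j)$ and with $\psi$ equal to the sum of everything not currently being processed, absorbing the newly produced term $\sqrt{2}\,\langle\sigma'_j,\tilde f(z_j)\rangle$ into $\psi$ for the next iteration. After all $M$ iterations,
\[
\E_\sigma\sup_{\tilde f}\sum_j\sigma_j h(\tilde f(z_j))\;\le\;\E_{\sigma'}\sup_{\tilde f}\sum_{j}\sqrt{2}\,\langle\sigma'_j,\tilde f(z_j)\rangle\;=\;\sqrt{2}\,\E_{\sigma'}\sup_{\tilde f}\sum_{j,t}\sigma'_{j,t}\tilde f_t(z_j),
\]
which is exactly the stated bound after reinstating the Lipschitz constant $L$.

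The main subtlety is the \textbf{non-compounding of the $\sqrt{2}$ factor} across iterations, which is why I take care to phrase the single-coordinate lemma with $\sqrt{2}$ \emph{inside} the supremum rather than as an overall multiplicative factor: a naïve application that pulled $\sqrt{2}$ out at each step would produce $(\sqrt{2})^M$, which is much too large. The other ingredient worth highlighting is the sharp constant $1/\sqrt{2}$ in Khintchine--Kahane, which is precisely what delivers the clean $\sqrt{2}$ in the final bound; any weaker Khintchine constant would propagate directly into a weaker vector-contraction constant.
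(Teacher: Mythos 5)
Your proof is correct, but note that the paper does not prove this statement at all: it is imported verbatim as Corollary~4 of the cited Maurer (2016) vector-contraction paper. What you have written is essentially a faithful reconstruction of Maurer's own argument --- the single-coordinate symmetrization lemma, the Lipschitz step, the sharp $L^1$--$L^2$ Khintchine--Kahane bound with Szarek's constant $1/\sqrt{2}$, and the coordinate-by-coordinate iteration in which the $\sqrt{2}\,\langle\sigma'_j,\tilde f(z_j)\rangle$ terms are absorbed into $\psi$ so the constant does not compound --- so there is nothing to flag beyond the routine measurability caveats for suprema over uncountable classes.
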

We apply Theorem \ref{Maurer} to our case by setting $Z=\gX^{k+2}$, $n=d(k+2)$ and 
$$\widetilde{\gF}=\left\{\tilde{f}(x,x^+,x_{j1}^-,...,x_{jk}^-)=(f(x),f(x^+),f(x_{j1}^-),...,f(x_{jk}^-))| f\in \gF \right\}$$ 

We also use $g_{\tilde{f}}=g_f$ where $\tilde{f}$ is derived from $f$ as in the definition of $\widetilde{F}$. Observe that now \ref{Maurer} is exactly in the form of \ref{gen.bound_goal} and we need to show that $L\leq \frac{c}{\sqrt{2}} \frac{\eta R \sqrt{k}}{B}$ for some constant c. But, for $z=(x,x^+,x_{1}^-,...,x_{k}^-)$, we have $g_{\tilde{f}}(z)=\frac{1}{B}\ell(\phi(\tilde{f}(z)))$ where $\phi:\mathbb{R}^{(k+2)d}\rightarrow \mathbb{R}^k$ and $\phi\left((v_t,v_t^+,v_{t1}^-,...,v_{tk}^-)_{t\in[d]}\right)=\left(\sum_t v_{t}(v_t^+-v_{ti}^-)\right)_{i\in[k]}$. Thus, we may use $h=\frac{1}{B} \ell \circ \phi$ to apply Theorem \ref{Maurer}. 

Now, we see that $\phi$ is $\sqrt{6k}R$-Lipschitz when $\sum_t v_t^2, \sum_t (v_t^+)^2,\sum_t (v_{tj}^-)^2 \leq R^2$ by computing its Jacobian. Indeed, for all $i,j\in [k]$ and $t\in[d]$, we have $\frac{\partial\phi_i}{\partial v_t}=v_t^+-v_{ti}^-$, $\frac{\partial\phi_i}{\partial v_t^+}=v_t$ and $\frac{\partial\phi_i}{\partial v_{tj}^-}=-v_t 1\{i=j\}$. From triangle inequaltiy, the Frobenius norm of the Jacobian $J$ of 
$\phi$ is 
$$||J||_F=\sqrt{\sum_{i,t}(v_t^+-v_{ti}^-)^2+2k\sum_{t}v_t^2}\leq \sqrt{4kR^2+2kR^2}=\sqrt{6k}R$$

 Now, taking into account that $||J||_2\leq ||J||_F$, we have that $\phi$ is $\sqrt{6k}R$-Lipschitz on its domain and since $\ell$ is $\eta$-Lipschitz, we have $L\leq \sqrt{6} \frac{\eta R \sqrt{k}}{B}$.

\par Now, we have that with probability at least $1-\frac{\delta}{2}$
\begin{equation}\label{almost_done}
L_{un}(\hat{f})\leq  \widehat{L}_{un}(\hat{f})+O \left(\frac{\eta R \sqrt{k} \mathcal{R}_\gS(\gF)}{M}+  B\sqrt{\frac{\log{\frac{1}{\delta}}}{M}} \right)
\end{equation}
Let $f^*\in \argmin_{f\in \gF}L_{un}(f)$. With probability at least $1-\frac{\delta}{2}$, we have that  $\widehat{L}_{un}(f^*)\leq L_{un}(f^*)+3B\sqrt{\frac{\log{\frac{2}{\delta}}}{2M}}$ (Hoeffding's inequality). Combining this with \Eqref{almost_done}, the fact that $\widehat{L}_{un}(\hat{f})\leq \widehat{L}_{un}(f^*)$ and applying a union bound, finishes the proof.
\end{proof}

\subsection{Proof of Proposition \ref{prop:blocks}}\label{appdx:proof_blocks}
By convexity of $\ell$, 

$$\ell\left(f(x)^T\left(\frac{\sum_i f(x^+_i)}{b} - \frac{\sum_i f(x^-_i)}{b}\right)\right)=\ell\left(\frac{1}{b}\sum_i f(x)^T\left( f(x^+_i) -f(x^-_i)\right)\right) \leq \frac{1}{b}\sum_i \ell\left(f(x)^T\left(f(x^+_i) -  f(x^-_i)\right) \right)  $$
Thus, 
\begin{align*}
L_{un}^{block}(f) = \mathop{\E}_{\substack{x, x^+_i \\ x^-_i }} \left[ \ell\left(f(x)^T\left(\frac{\sum_i f(x^+_i)}{b} - \frac{\sum_i f(x^-_i)}{b}\right)\right) \right] \leq \mathop{\E}_{\substack{x, x^+_i \\ x^-_i }} \left[ \frac{1}{b}\sum_i \ell\left(f(x)^T\left(f(x^+_i) -  f(x^-_i)\right) \right) \right] = L_{un}(f)
\end{align*}
The proof of the lower bound is analogous to that of Lemma \ref{lemma:jensen}.

\qed

\section{Results for k Negative Samples}
\subsection{Formal theorem statement and proof}\label{appdx:multiclass}
We now present Theorem \ref{thm:multiclass_appdx} as the formal statement of Theorem \ref{thm:multiclass_theorem} and prove it. First we define some necessary quantities.

Let $(c^+,c_1^-,\dots,c_k^-)$ be $k+1$ not necessarily distinct classes. We define $Q(c^+,c_1^-,\dots,c_k^-)$ to be the set of distinct classes in this tuple.
We also define $I^+(c_1^-,...,c_k^-)=\{i \in [k] \ |\ c_i^-=c^+\}$ to be the set of indices where $c^+$ reappears in the negative samples.
We will abuse notation and just write $Q$, $I^+$ when the tuple is clear from the context.

To define $L_{un}^{\neq}(f)$ consider the following tweak in the way the latent classes are sampled: sample  $c^+,c_1^-,\dots,c_k^-\sim \rho^{k+1}$ conditioning on $|I^+|< k$ and then remove all $c_i^-$, $i\in I^+$. The datapoints are then sampled as usual: $x,x^+\sim \gD_{c^+}^2$ and $x_i^-\sim \gD_{c_i^-}$, $i\in[k]$, independently.
\begin{align*}
L_{un}^{\neq}(f) \coloneqq \mathop{\E}\limits_{\substack{c^+,c_i^-  \\ x,x^+, x_i^-  }} \left[\ell \left( \left\{f(x)^T\left(f(x^+)  -  f(x_i^-)  \right)\right\}_{i\notin I^+ } \right) \Big{|} |I^+|<k \right]
\end{align*}
which always contrasts points from different classes, since it only considers the negative samples that are not from $c^+$.

The generalization error is \footnote{The $\log{k}$ term can be made $O(1)$ for the hinge loss.}
\begin{equation*}
Gen_{M}=O\left( R\sqrt{k} \frac{\gR_{\gS}(\gF)}{M} + (R^2 +\log{k}) \sqrt{\frac{\log{\frac{1}{\delta}}}{M}}\right) \end{equation*}
were $\gR_{\gS}(\gF)$ is the extension of the definition in Section~\ref{sec:power_of_framework}: $\gR_\gS(\gF)=\mathop{\E}\limits_{\sigma \sim \{\pm1\}^{(k+2)dM}} \left[ \sup_{f\in \gF} \langle \sigma, f_{|\gS} \rangle \right]$, where  $f_{|\gS}=\left(f_t(x_j),f_t(x_j^+),f_t(x_{j1}^-),\dots, ,f_t(x_{jk}^-)\right)_{\substack{j\in[M],t \in [d]}}$.

For $c^+,c_1^-,...,c_k^- \sim \rho^{k+1}$, let $\tau_k=\mathbb{P}[I^+\neq\emptyset]$ and $\tau'=\mathbb{P}[c^+=c_i^-,\forall i]$.
Observe that $\tau_1$, as defined in Section \ref{sec:power_of_framework}, is $\mathbb{P}[c^+=c_1^-]$.
Let $p_{max}(\gT)=\max_c{\gD_{\gT}(c)}$ and 
$$\rho_{min}^+(\gT)=\min_{c\in \gT} \mathbb{P}_{c^+,c_i^- \sim \rho^{k+1}} \left(c^+=c|Q=\gT, I^+ = \emptyset \\ \right)$$ 
In Theorem \ref{thm:multiclass_appdx} we will upper bound the following quantity: $\mathop{\E}\limits_{\gT \sim \gD}\bigg[\frac{\rho_{min}^+(\gT)}{p_{max}(\gT)}\  L_{sup}^\mu(\gT,\hat{f}) \bigg]$ ($\gD$ was defined in Section \ref{subsec:k-way_guarantees}).


\begin{theorem}\label{thm:multiclass_appdx}
Let $\hat{f}\in \argmin_{f\in \gF} \widehat{L}_{un}(f)$. With probability at least $1-\delta$, for all $f\in \gF$
\begin{equation*}\label{whole}
\mathop{\E}\limits_{\gT \sim \gD}\bigg[\frac{\rho_{min}^+(\gT)}{p_{max}(\gT)}\  L_{sup}^\mu(\gT,\hat{f}) \bigg]\leq \frac{1-\tau'}{1-\tau_k} L_{un}^{\neq}(f) + c'k\frac{\tau_1}{1-\tau_k}s(f)
+\frac{1}{1-\tau_k}Gen_{M} 
\end{equation*}
 where $c'$ is a constant.
\end{theorem}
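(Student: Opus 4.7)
\textbf{Proof proposal for Theorem~\ref{thm:multiclass_appdx}.} The plan is to mirror the three-step structure of Theorem~\ref{thm:binary_theorem} in the $k$-negative-sample setting: (a) a Rademacher-style generalization bound for $\widehat L_{un}$; (b) a Jensen-based lower bound of $L_{un}(\hat f)$ in terms of a mean-classifier supervised loss; (c) a class-collision upper bound of $L_{un}(f)$ in terms of $L_{un}^{\neq}(f)$ and $s(f)$. Chained together these read, schematically,
\[
(1-\tau_k)\,\mathop{\E}_{\gT\sim\gD}\!\Bigl[\tfrac{\rho_{min}^+(\gT)}{p_{max}(\gT)}L_{sup}^\mu(\gT,\hat f)\Bigr]
\;\stackrel{\text{(b)}}{\le}\; L_{un}(\hat f)+\text{const}
\;\stackrel{\text{(a)}}{\le}\; L_{un}(f)+Gen_M+\text{const}
\;\stackrel{\text{(c)}}{\le}\; (1-\tau')L_{un}^{\neq}(f)+c'k\tau_1 s(f)+Gen_M,
\]
and dividing by $(1-\tau_k)$ yields the theorem. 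Step (a) is a direct invocation of Lemma~\ref{generalization} with $k$ negatives and the appropriate Lipschitz constant and range of $\ell$ (for logistic loss, $B=O(R^2+\log k)$).

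For step (b), I push the expectation over $x^+$ and the $x_i^-$ through the convex loss $\ell$:
\[
L_{un}(\hat f)\;\ge\;\mathop{\E}_{c^+,c_i^-}\mathop{\E}_{x\sim\gD_{c^+}}\Bigl[\ell\bigl(\{\hat f(x)^T(\mu_{c^+}-\mu_{c_i^-})\}_{i=1}^k\bigr)\Bigr].
\]
I then split the outer expectation according to $I^+=\{i:c_i^-=c^+\}$. On the event $I^+=\emptyset$ (probability $1-\tau_k$), further conditioning on $Q=\gT$ produces a mean-classifier loss on the task $\gT$, but the induced conditional distribution on $c^+$ is generally non-uniform on $\gT$ while $L_{sup}^\mu(\gT,\hat f)$ uses the uniform class distribution; the worst-case reweighting ratio is precisely $\rho_{min}^+(\gT)/p_{max}(\gT)$, which is exactly the weighting that appears in the theorem. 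On the event $I^+\ne\emptyset$ the coordinates $i\in I^+$ satisfy $\mu_{c^+}-\mu_{c_i^-}=0$, and by coordinatewise monotonicity of hinge/logistic loss these zero coordinates only enlarge $\ell$, so they give a harmless additive constant that ultimately cancels with the constant extracted in step (c).

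For step (c), I decompose $L_{un}(f)$ again by conditioning on $I^+$. When $|I^+|<k$, I compare the full $k$-argument loss with the $(k-|I^+|)$-argument loss that defines $L_{un}^{\neq}(f)$; the discrepancy is the contribution of the collision coordinates $i\in I^+$, each of which is an iid triple $(x,x^+,x_i^-)$ from $\gD_{c^+}$. A coordinatewise application of Lemma~\ref{lemma:general_deviation} (with $t=1$ per collision coordinate) bounds this discrepancy by $c'\sqrt{\|\Sigma(f,c^+)\|_2}\,\E_{x\sim\gD_{c^+}}\|f(x)\|$, and summing over $i$ and taking expectation (using $\E|I^+|=k\tau_1$) produces the $c'k\tau_1 s(f)$ term. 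When $|I^+|=k$ (probability $\tau'$), a single application of Lemma~\ref{lemma:general_deviation} with $t=k$ absorbs the fully-collided block into the same $s(f)$ expression; the $(1-\tau')$ coefficient in front of $L_{un}^{\neq}(f)$ comes naturally from the complementary probability.

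The main obstacle will be the bookkeeping in step (b): moving from the Jensen lower bound, which is an expectation over a non-uniform conditional distribution on $(c^+,Q)$, back to the uniform-class quantity $L_{sup}^\mu(\gT,\hat f)$ while maintaining the correct reweighting factor $\rho_{min}^+(\gT)/p_{max}(\gT)$, and simultaneously ensuring that the leftover constants from the $I^+\ne\emptyset$ branch of step (b) cancel against the constants produced by step (c). The remaining ingredients --- Jensen, the generalization lemma, and Lemma~\ref{lemma:general_deviation} --- are then assembled as in the binary proof of Theorem~\ref{thm:binary_theorem}, with the division by $(1-\tau_k)$ producing the factors $\tfrac{1-\tau'}{1-\tau_k}$, $\tfrac{c'k\tau_1}{1-\tau_k}$, and $\tfrac{1}{1-\tau_k}$ in the final statement.
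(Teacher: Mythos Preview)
Your three-step skeleton is exactly the paper's proof: Jensen to pass to class means, split on $I^+$ and reweight by $\rho_{min}^+(\gT)/p_{max}(\gT)$ on the $I^+=\emptyset$ branch, invoke the generalization lemma, then upper-bound $L_{un}(f)$ via the splitting property $\ell(\{v_i\}_{i\in[k]})\le \ell(\{v_i\}_{i\notin I^+})+\ell(\{v_i\}_{i\in I^+})$ and control the collision block with Lemma~\ref{lemma:general_deviation}. Your observation that $\mathbb{E}[|I^+|\,g(c^+)]=\sum_c \rho(c)\cdot k\rho(c)\,g(c)=k\tau_1\,\mathbb{E}_{c\sim\nu}[g(c)]=k\tau_1\,s(f)$ is in fact a cleaner route to the $s(f)$ coefficient than the paper's detour through the auxiliary distribution $u(c)\propto\rho(c)(1-(1-\rho(c))^k)$.

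There is one genuine slip, though: applying Lemma~\ref{lemma:general_deviation} ``coordinatewise with $t=1$'' does not give the cancellation you need. If you iterate the splitting inequality to write $\ell(\{v_i\}_{i\in I^+})\le\sum_{i\in I^+}\ell(v_i)$ and apply the lemma with $t=1$ to each summand, the constant you extract is $|I^+|\,\ell_1(0)$ (e.g.\ $|I^+|$ for hinge), whereas the constant your step~(b) produces on the $I^+\neq\emptyset$ branch is $\ell_{|I^+|}(\vec 0)$ (e.g.\ $1$ for hinge). These do not match when $|I^+|>1$, so after subtraction you are left with a spurious nonnegative additive term of order $(k\tau_1-\tau_k)/(1-\tau_k)$ that is absent from the theorem. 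The fix is simply to apply Lemma~\ref{lemma:general_deviation} once with $t=|I^+|$ to the whole block $\ell(\{v_i\}_{i\in I^+})$; the lemma is stated for arbitrary $t$ precisely for this reason, and it yields
\[
\mathbb{E}\bigl[\ell(\{v_i\}_{i\in I^+})\bigr]-\ell_{|I^+|}(\vec 0)\;\le\;c'\,|I^+|\,\sqrt{\|\Sigma(f,c^+)\|_2}\,\mathbb{E}_{x\sim\gD_{c^+}}\|f(x)\|,
\]
whose constant now matches step~(b) exactly. After this correction, your $\mathbb{E}|I^+|=k\tau_1$ computation finishes the job.
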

Note that the definition of $s(f)$ used here is defined in Section \ref{sec:power_of_framework} 
\begin{proof}
First, we note that both hinge and logistic loss satisfy the following property:  $\forall I_1,I_2$ such that $I_1\cup I_2 = [t]$ we have that \begin{equation}\label{sets} \ell(\{\vv_i\}_{i\in I_1}) \leq \ell(\{\vv_i\}_{i\in[t]}) \leq \ell(\{\vv_i\}_{i\in I_1})+\ell(\{\vv_i\}_{i\in I_2})\end{equation}

We now prove the Theorem in 3 steps. First, we leverage the convexity of $\ell$ to upper bound a supervised-type loss with the unsupervised loss $L_{un}(f)$ of any $f\in \gF $. We call it supervised-type loss because it also includes degenerate tasks: $|\gT|=1$. Then, we decompose the supervised-type loss into an average loss over a distribution of supervised tasks, as defined in the Theorem, plus a degenerate/constant term. Finally, we upper bound the unsupervised loss $L_{un}(f)$ with two terms: $L_{un}^{\neq}(f)$ that measures how well $f$ contrasts points from different classes and an intraclass deviation penalty, corresponding to $s(f)$. 

{\bf \em Step 1 (convexity):} When the class $c$ is clear from context, we write $\hat{\mu}_{c}=\mathop{\E}\limits_{x\sim c}[\hat{f}(x)]$. Recall that the sampling procedure for unsupervised data is as follows: sample $c^+,c_1^-,...,c_k^-\sim \rho^{k+1}$ and then $x,x^+\sim \gD_{c^+}^2$ and $x_i^-\sim \gD_{c_i^-}$, $i\in[k]$. So, we have
\begin{equation}\label{Jen}
\begin{split}
L_{un}(\hat{f})&=\mathop{\E}\limits_{\substack{c^+,c_i^-\sim \rho^{k+1} \\ x,x^+ \sim  \gD_{c^+}^2 \\ x_i^- \sim \gD_{c_i^-}}} \left[ \ell \left(\left\{\hat{f}(x)^T\left(\hat{f}(x^+)-\hat{f}(x_i^-)\right)\right\}_{i=1}^k\right) \right] \\
 &= \mathop{\E}\limits_{\substack{c^+,c_i^-\sim \rho^{k+1}  \\ x\sim  \gD_{c^+} }}   \mathop{\E}\limits_{\substack{x^+\sim \gC_{c^+}\\x_i^- \sim \gD_{c_i^-}}}  \left[ \ell \left(\left\{\hat{f}(x)^T\left(\hat{f}(x^+)-\hat{f}(x_i^-)\right)\right\}_{i=1}^k\right) \right] 
\geq  \mathop{\E}\limits_{\substack{c^+,c_i^-\sim \rho^{k+1}  \\ x\sim  \gD_{c^+} }}    \left[ \ell \left(\left\{\hat{f}(x)^T\left(\hat{\mu}_{c^+}-\hat{\mu}_{c_i^-}\right)\right\}_{i=1}^k\right) \right]
\end{split}
\end{equation}
where the last inequality follows by applying the usual Jensen's inequality and the convexity of $\ell$. Note that in the upper bounded quantity, the $c^+,c_1^-,...,c_k^-$ don't have to be distinct and so the tuple does not necessarily form a task. 

{\bf \em Step 2 (decomposing into supervised tasks)}
We now decompose the above quantity to handle repeated classes.
\begin{align}\label{sup_dec}
\mathop{\E}\limits_{\substack{c^+,c_i^-\sim \rho^{k+1} \\ x\sim  \gD_{c^+} }}  &\left[ \ell \left(\left\{\hat{f}(x)^T\left(\hat{\mu}_{c^+}-\hat{\mu}_{c_i^-}\right)\right\}_{i=1}^k\right) \right]  \nonumber \\
& \geq (1-\tau_k) \mathop{\E}\limits_{\substack{c^+,c_i^-\sim \rho^{k+1} \\ x\sim  \gD_{c^+} }}    \left[ \ell \left(\left\{\hat{f}(x)^T\left(\hat{\mu}_{c^+}-\hat{\mu}_{c_i^-}\right)\right\}_{i=1}^k\right)  \Bigg| I^+= \emptyset\right] 
+\tau_k \mathop{\E}\limits_{c^+,c_i^-\sim \rho^{k+1}} [\ell(\underbrace{0,...,0}_\text{$|I^+|$ times}) | I^+ \neq \emptyset] \nonumber \\
&\geq  (1-\tau_k) \mathop{\E}\limits_{\substack{c^+,c_i^-\sim \rho^{k+1} \\ x\sim  \gD_{c^+} }}    \left[ \ell \left(\left\{\hat{f}(x)^T\left(\hat{\mu}_{c^+}-\hat{\mu}_{c}\right)\right\}_{\substack{c \in Q \\ c \neq c^+}} \right)  \Bigg| I^+= \emptyset\right] +\tau_k \mathop{\E}\limits_{c^+,c_i^-\sim \rho^{k+1} } \left[\ell_{|I^+|}(\vec{0})\ \Big{|} \ I^+ \neq \emptyset \right]
\end{align}
where $\ell_{t}(\vec{0})=\ell(0,\dots,0)$ ($t$ times).
Both inequalities follow from the LHS of \Eqref{sets}. Now we are closer to our goal of lower bounding an average supervised loss, since the first expectation in the RHS has a loss which is over a set of distinct classes. However, notice that this loss is for separating $c^+$ from $Q(c^+,c_1^-,...,c_k^-)\setminus \{c^+\}$. We now proceed to a symmetrization of this term to alleviate this issue.
\par Recall that in the main paper, sampling $\gT$ from $\gD$ is defined as sampling the (k+1)-tuple from $\rho^{k+1}$ conditioned on $I^+=\emptyset$ and setting $\gT=Q$. Based on this definition, by the tower property of expectation, we have

\begin{equation}\label{symmetrization1}
\begin{split}
&\mathop{\E}\limits_{\substack{c^+,c_i^-\sim \rho^{k+1} \\ x\sim  \gD_{c^+}}}    \left[ \ell \left(\left\{\hat{f}(x)^T\left(\hat{\mu}_{c^+}-\hat{\mu}_{c}\right)\right\}_{\substack{c \in Q \\ c \neq c^+}} \right) \Bigg | I^+= \emptyset\right]  \\
&= \mathop{\E}\limits_{\gT\sim \gD}  \mathop{\E}\limits_{\substack{c^+,c_i^-\sim \rho^{k+1} \\ x\sim  \gD_{c^+}}}  \Big[ \ell \Big(\Big\{\hat{f}(x)^T\big(\hat{\mu}_{c^+}-\hat{\mu}_{c}\big)\Big\}_{\substack{c \in Q \\ c \neq c^+}} \Big) \Big | Q= \gT,I^+= \emptyset\Big] \\
&= \mathop{\E}\limits_{\gT\sim \gD}  \mathop{\E}\limits_{\substack{c^+\sim \rho^+(\gT) \\ x\sim  \gD_{c^+} }}  \Big[ \ell \Big(\Big\{\hat{f}(x)^T\big(\hat{\mu}_{c^+}-\hat{\mu}_{c}\big)\Big\}_{\substack{c \in \gT  \\ c \neq c^+}} \Big) \Big]
\end{split}
\end{equation}
 where  $\rho^+(\gT)$ is the distribution of $c^+$ when $(c^+,c_1^-,...,c_k^-)$ are sampled from $\rho^{k+1}$ conditioned on $Q=\gT$ and $I^+=\emptyset$. Recall that $\rho_{min}^+(\gT)$ from the theorem's statement is exactly the minimum out of these $|\gT|$ probabilities. Now, to lower bound the last quantity with the LHS in the theorem statement, we just need to observe that for all tasks $\gT$
 
 \begin{equation}\label{symmetrization2}
\begin{split}
 &\mathop{\E}\limits_{\substack{c^+\sim \rho^+(\gT) \\ x\sim  \gD_{c^+} }}  \Big[ \ell \Big(\Big\{\hat{f}(x)^T\big(\hat{\mu}_{c^+}-\hat{\mu}_{c}\big)\Big\}_{\substack{c \in \gT  \\ c \neq c^+}} \Big) \Big] \\
 &\geq \frac{\rho_{min}^+(\gT)}{p_{max}(\gT)}\mathop{\E}\limits_{\substack{c^+\sim \gD_{\gT} \\ x\sim  \gD_{c^+} }}  \Big[ \ell \Big(\Big\{\hat{f}(x)^T\big(\hat{\mu}_{c^+}-\hat{\mu}_{c}\big)\Big\}_{\substack{c \in \gT  \\ c \neq c^+}} \Big) \Big]\\
 &= \frac{\rho_{min}^+(\gT)}{p_{max}(\gT)}L_{sup}(\gT,\hat{f})
\end{split}
\end{equation}
 By combining this with \Eqrefs{Jen}{sup_dec}{symmetrization2} we get 
 
 \begin{equation}\label{summary_sup}
 (1-\tau_k) \mathop{\E}\limits_{\gT\sim \gD} \Bigg[ \frac{\rho_{min}^+(T)}{p_{max}(T)} L_{sup}(\gT,\hat{f})  \Bigg]\leq L_{un}(\hat{f}) -\tau_k \mathop{\E}\limits_{c^+,c_i^-\sim \rho^{k+1} } \left[\ell_{|I^+|}(\vec{0})\ \Big|\  I^+ \neq \emptyset \right]
\end{equation}
Now, by applying Lemma \ref{generalization}, we bound the generalization error: with probability at least $1-\delta$, $\forall f\in \gF$
\begin{equation}\label{eq_generalize}
L_{un}(\hat{f})\leq L_{un}(f)+Gen_M
\end{equation}

However, $L_{un}(f)$ cannot be made arbitrarily small. One can see that for all $f\in \gF$, $L_{un}(f)$ is lower bounded by the second term in \Eqref{Jen}, which cannot be made arbitrarily small as $\tau_k > 0$.
\begin{equation}\label{lower bound}
L_{un}(f)\geq\mathop{\E}\limits_{\substack{c^+,c_i^-\sim \rho^{k+1} \\ x,x^+ \sim  \gD_{c^+} \\ x_i^- \sim \gD_{c_i^-}}} \left[ \ell \left(\left\{f(x)^T\left(f(x^+)-f(x_i^-)\right)\right\}_{i\in I^+}\right) \right] \geq \tau \mathop{\E}\limits_{c^+,c_i^-\sim \rho^{k+1} } \left[\ell_{|I^+|}(\vec{0}) \ \Big|\ I^+ \neq \emptyset \right] 
\end{equation}
where we applied Jensen's inequality. Since $\tau_k$ is not 0, the above quantity can never be arbitrarily close to 0 (no matter how rich $\gF$ is). 

{\em \bf Step 3 ($L_{un}$ decomposition)} 
Now, we decompose $L_{un}(f)$ by applying the RHS of \Eqref{sets}

\begin{align}
&\gL_{un}(f)\leq \mathop{\E}\limits_{\substack{c^+,c_i^- \sim \rho^{k+1} \\ x,x^+\sim \gD_{c^+}^2 \\ x_i^-\sim  \gD_{c_i^-}}} \Big[ \ell \Big( \Big\{f(x)^T\big(f(x^+)  -  f(x_i^-)  \big)\Big\}_{i\notin I^+ } 
\Big)
+ \ell \Big( \Big\{f(x)^T\big(f(x^+)  -  f(x_i^-)  \big)\Big\}_{i\in I^+} \Big)\Big]\\
&= \mathop{\E}\limits_{\substack{c^+,c_i^- \sim \rho^{k+1} \\ x,x^+\sim \gD_{c^+}^2 \\ x_i^-\sim  \gD_{c_i^-},\ i\notin I^+}} \Big[ \ell \Big( \Big\{f(x)^T\big(f(x^+)  -  f(x_i^-)  \big)\Big\}_{i\notin I^+ } 
\Big)\Big] 
+ \mathop{\E}\limits_{\substack{c^+,c_i^- \sim \rho^{k+1}  \\ x,x^+\sim \gD_{c^+}^2 \\ x_i^-\sim \gD_{c_i^-},\  i \in I^+ }}  \left[ \ell \left( \left\{f(x)^T\left(f(x^+)  -  f(x_i^-)  \right)\right\}_{i\in I^+ } \right)\right]\\
\begin{split}\label{unsup_dec}
= (1-\tau') \mathop{\E}\limits_{\substack{c^+,c_i^- \sim \rho^{k+1} \\ x,x^+\sim \gD_{c^+}^2  \\ x_i^- \sim  \gD_{c_i^-}, i \ \notin I^+ }} \Big[ \ell \Big( \Big\{f(x)^T\big(f(x^+)-  f(x_i^-)  \big)\Big\}_{i\notin I^+ } 
\Big)\Big{|} |I^+|<k\Big]\\
 \qquad  \qquad +\tau_k \mathop{\E}\limits_{\substack{c^+,c_i^- \sim \rho^{k+1}  \\ x,x^+\sim \gD_{c^+}^2 \\ x_i^-\sim \gD_{c_i^-},\  i \in I^+ }} \left[ \ell \left( \left\{f(x)^T\left(f(x^+)  -  f(x_i^-)  \right)\right\}_{i\in I^+ } \right)\Bigg{|}I^+\neq \emptyset \right]
\end{split}
\end{align}
Observe that the first term is exactly $(1-\tau')L_{un}^{\neq}(f)$. Thus, combining (\ref{summary_sup}), (\ref{eq_generalize}) and (\ref{unsup_dec}) we get

\begin{equation}\label{summary}
\begin{split}
 (1-\tau_k) \mathop{\E}\limits_{\gT\sim \gD} \Bigg[ \frac{\rho_{min}^+(T)}{p_{max}(T)}&L_{sup}(\gT,\hat{f})  \Bigg]\leq (1-\tau') L_{un}^{\neq}(f)+Gen_{M}\\
&+\tau_k \underbrace{\mathop{\E}\limits_{c^+,c_i^-\sim \rho^{k+1}} \Bigg[ \mathop{\E}\limits_{\substack{x,x^+ \sim \gD_{c^+}^2 \\ x_i^- \sim \gD_{c_i^-}, \ i\in  I^+}} \Big[
\ell \Big( \Big\{f(x)^T  \big(f(x^+)  -  f(x_i^-)  \big)\Big\}_{i\in I^+ }\Big)\Big]  - \ell_{|I^+|}(\vec{0}) \ \Bigg{|} I^+ \neq \emptyset \Bigg]}_{\Delta(f)}
\end{split}
\end{equation}
From the definition of $I^+$, $c_i^-=c^+$, $\forall i \in I^+$. Thus, from Lemma \ref{lemma:general_deviation}, we get that 

\begin{equation}\label{eq:Delta}
\Delta(f)\leq c'\mathop{\E}\limits_{c^+,c_i^-\sim \rho^{k+1}}\left[|I^+| \sqrt{\|\Sigma(f,c)\|_2}\mathop{\E}\limits_{x\sim \gD_c}[\|f(x)\|] \ \Big| \ I^+\neq \emptyset \right]
\end{equation}
for some constant $c'$.

Let $u$ be a distribution over classes with $u(c)=\mathbb{P}_{c^+,c_i^-\sim \rho^{k+1}}[c^+=c | I^+\neq \emptyset]$ and it is easy to see that $u(c)\propto \rho(c)\big(1-(1-\rho(c))^{k}\big)$ By applying the tower property to \Eqref{eq:Delta} we have
  \begin{equation}
\begin{split}
\Delta(f) \leq c'  \mathop{\E}\limits_{c \sim u }\left[ \mathop{\E}\limits_{ c^+,c_i^- \sim \rho^{k+1}}\left[|I^+| \big{|}c^+=c, I^+\neq \emptyset \right]\    \sqrt{ \| \Sigma(f,c)\|_2}\mathop{\E}\limits_{ x\sim \gD_{c} }\left[ \|f(x)\| \right]   \right]
\end{split}
\end{equation}
 But, 
\begin{equation}
\begin{split}
 \mathop{\E}\limits_{ c^+,c_i^- \sim \rho^{k+1}}\big[|I^+| \big{|}c^+=c, I^+\neq \emptyset \big]&=\sum_{i=1}^k \mathbb{P}_{c^+,c_i^- \sim \rho^{k+1}}\big(c_i^-=c^+ \big{|} c^+=c,I^+\neq \emptyset\big) \\
& =k\mathbb{P}_{c^+,c_i^- \sim \rho^{k+1}}\big(c_1^-=c^+ \big{|} c^+=c,I^+\neq \emptyset\big) \\
& =k\frac{\mathbb{P}_{c^+,c_i^- \sim \rho^{k+1}}\big(c_1^-=c^+=c \big)}{\mathbb{P}_{c^+,c_i^- \sim \rho^{k+1}}\big(c^+=c, I^+\neq \emptyset\big)}\\
&=k\frac{\rho^2(c)}{\rho(c)\big(1-(1-\rho(c))^{k}\big)}=k\frac{\rho(c)}{1-(1-\rho(c))^{k}}
\end{split}
\end{equation}
Now, using the fact that $\tau_k=1-\sum_{c'}\rho(c')(1-\rho(c'))^{k}=\sum_{c'}\rho(c')\left(1-(1-\rho(c'))^{k}\right)$ and $\tau_1=\sum_c\rho^2(c)$,
\begin{equation}
\begin{split}
\frac{\tau_k}{1-\tau_k}\Delta(f)&\leq \frac{\tau_k}{1-\tau_k} c'  \mathop{\E}\limits_{c \sim u }\left[k\frac{\rho(c)}{1-(1-\rho(c))^{k}} \sqrt{ \| \Sigma(f,c)\|_2}\mathop{\E}\limits_{ x\sim \gD_{c} }\left[ \|f(x)\|\right] \right]\\
&=c'k\frac{\tau_k}{1-\tau_k} \sum_c \frac{\rho^2(c)}{\sum_{c'}\rho(c')\left(1-(1-\rho(c'))^{k}\right)} \sqrt{ \| \Sigma(f,c)\|_2}\mathop{\E}\limits_{ x\sim \gD_{c} }\left[ \|f(x)\|\right] \\
&=c'k\frac{\tau_1}{1-\tau_k} \mathop{\E}\limits_{c \sim \nu }\left[ \sqrt{ \| \Sigma(f,c)\|_2}\mathop{\E}\limits_{ x\sim \gD_{c} }\left[ \|f(x)\|\right] \right]=c'k\frac{\tau_1}{1-\tau_k}s(f)
\end{split}
\end{equation}
and we are done.
\end{proof}

\subsection{Competitive Bound}\label{appdx:subgaussian_multiclass}
As in Section \ref{subsec:subgaussian}, we prove a competitive type of bound, under similar assumptions. Let $\ell_\gamma(\vv)=\max\{0,1+\max_i\{-\vv_i\}/\gamma\}$, $\vv\in \mathbb{R}^k$, be the multiclass hinge loss with margin $\gamma$ and for any $\gT$ let $L_{\gamma,sup}^\mu(\gT,f)$ be  $L_{sup}^\mu(\gT,f)$ when $\ell_\gamma$ is used as loss function. For all tasks $\gT$, let ${\rho'}^+(\gT)$ is the distribution of $c^+$ when $(c^+,c_1^-,...,c_k^-)$ are sampled from $\rho^{k+1}$ conditioned on $Q=\gT$ and $|I^+|<k$. Also, let
${\rho'}_{max}^+(\gT)$ be the maximum of these $|\gT|$ probabilities and $p_{min}(\gT)=\min_{c\in \gT}\gD_{\gT}(c)$. 

We will show a competitive bound against the following quantity, for all $f\in \gF$: $\mathop{\E}\limits_{\gT\sim \gD'}\left[\frac{{\rho'}_{max}^+(\gT)}{p_{min}(\gT)}\gL_{\gamma,sup}^\mu(\gT,f)\right]$, where $\gD'$ is defined as follows: sample $c^+,c_1^-,...,c_k^- \sim \rho^{k+1}$, conditioned on $|I^+|<k$. Then, set $\gT=Q$. Observe that when $I^+=\emptyset$ with high probability, we have $\gD'\approx \gD$.
\begin{lemma}\label{lemma:multiclass_subgaussian}
For all $f \in \gF$ suppose the random variable $f(X)$, where $X\sim D_c$, is $\sigma^2(f)$-subgaussian in every direction for every class $c$ and has maximum norm $R(f) = max_{x\in \gX} \|f(x)\|$. Let $\widehat{f}\in \argmin_{f\in \gF} \widehat{L}_{un}(f)$. Then for all $\epsilon > 0$, with probability at least $1-\delta$, for all $f\in \gF$
\begin{align*}
\mathop{\E}\limits_{\gT \sim \gD}\bigg[\frac{\rho_{min}^+(\gT)}{p_{max}(\gT)}\  L_{sup}^\mu(\gT,\hat{f}) \bigg] \leq  \alpha \gamma(f) \mathop{\E}\limits_{\gT\sim \gD'}\left[\frac{{\rho'}_{max}^+(\gT)}{p_{min}(\gT)}\gL_{\gamma,sup}^\mu(\gT,f)\right] + \beta s(f)+ \eta Gen_{M} +\epsilon
\end{align*}
where $\gamma(f)=1+c'R(f)\sigma(f) (\sqrt{\log{k}}+\sqrt{\log {\frac{R(f)}{\epsilon}} })$, $c'$ is some constant, $\alpha=\frac{1-\tau'}{1-\tau_k}$, $\beta=k\frac{\tau_1}{1-\tau_k}$ and $\eta=\frac{1}{1-\tau_k}$.
\end{lemma}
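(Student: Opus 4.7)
\textbf{Proof proposal for Lemma \ref{lemma:multiclass_subgaussian}.} The plan is to combine Theorem \ref{thm:multiclass_appdx} with a multiclass analogue of Lemma \ref{lemma:subgaussian}. Concretely, Theorem \ref{thm:multiclass_appdx} already gives
\begin{equation*}
\mathop{\E}\limits_{\gT \sim \gD}\Big[\tfrac{\rho_{min}^+(\gT)}{p_{max}(\gT)}\ L_{sup}^\mu(\gT,\widehat f)\Big] \le \tfrac{1-\tau'}{1-\tau_k}\,L_{un}^{\neq}(f) + c'k\tfrac{\tau_1}{1-\tau_k}\,s(f) + \tfrac{1}{1-\tau_k}\,Gen_M,
\end{equation*}
so it suffices to upper bound $L_{un}^{\neq}(f)$ by $\gamma(f)\,\mathop{\E}\limits_{\gT\sim\gD'}\big[\tfrac{{\rho'}_{max}^+(\gT)}{p_{min}(\gT)}\gL_{\gamma,sup}^\mu(\gT,f)\big]+\epsilon$ for the stated $\gamma(f)$. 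Everything else in the desired conclusion then falls out by plugging in and collecting constants.

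The heart of the argument is the sub-Gaussian concentration step. Fix $f\in\gF$ and condition on $c^+,c_1^-,\dots,c_k^-\sim\rho^{k+1}$ with $|I^+|<k$, and on $x\sim\gD_{c^+}$. For each $i\notin I^+$ write
\[
f(x)^T\big(f(x_i^-)-f(x^+)\big)=\mu_i+z_i,\qquad \mu_i=f(x)^T(\mu_{c_i^-}-\mu_{c^+}),
\]
where $z_i=f(x)^T(f(x_i^-)-\mu_{c_i^-})-f(x)^T(f(x^+)-\mu_{c^+})$ is, given $x$, a sum of two independent mean-zero random variables that are each $R(f)^2\sigma(f)^2$-sub-Gaussian (because $f(x_i^-)$ and $f(x^+)$ are $\sigma(f)^2$-sub-Gaussian in every direction within their class and $\|f(x)\|\le R(f)$). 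Hence each $z_i$ is $2R(f)^2\sigma(f)^2$-sub-Gaussian. By a union bound,
\[
\Pr\big[\max_{i\notin I^+} z_i\ge \gamma(f)-1\big]\le k\exp\!\Big(-\tfrac{(\gamma(f)-1)^2}{4R(f)^2\sigma(f)^2}\Big)\le \tfrac{\epsilon}{3R(f)^2},
\]
for the choice $\gamma(f)=1+c'R(f)\sigma(f)\big(\sqrt{\log k}+\sqrt{\log(R(f)/\epsilon)}\big)$ with $c'$ absolute. On the complementary event the hinge loss is at most $\big(1+\max_i\mu_i/\gamma(f)\big)_+\cdot\gamma(f)$, and on the low-probability event it is trivially at most $2R(f)^2+1$; the contribution of the latter is absorbed into $\epsilon$ as in the binary case. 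Taking expectation over $x$ and the negatives yields
\[
L_{un}^{\neq}(f)\le \gamma(f)\mathop{\E}\Big[\big(1+\max_{i\notin I^+}\tfrac{f(x)^T(\mu_{c_i^-}-\mu_{c^+})}{\gamma(f)}\big)_+\,\Big|\,|I^+|<k\Big]+\epsilon,
\]
which is exactly $\gamma(f)$ times the mean-classifier hinge-with-margin-$\gamma(f)$ loss on the task $\gT=Q(c^+,c_1^-,\dots,c_k^-)$, with $c^+$ as the true label.

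The remaining step is the symmetrization over the role of the positive class inside each task, mirroring equations \eqref{symmetrization1}–\eqref{symmetrization2} in the proof of Theorem \ref{thm:multiclass_appdx}. By the tower property, conditioning on $Q=\gT$ samples $c^+$ from ${\rho'}^+(\gT)$ (the distribution of $c^+$ given $Q=\gT,|I^+|<k$), so
\[
\mathop{\E}\limits_{c^+\sim {\rho'}^+(\gT)}\!\!\mathop{\E}\limits_{x\sim\gD_{c^+}}\!\big[\ell_{\gamma(f)}(\dots)\big]\le \tfrac{{\rho'}_{max}^+(\gT)}{p_{min}(\gT)}\mathop{\E}\limits_{c^+\sim\gD_\gT}\mathop{\E}\limits_{x\sim\gD_{c^+}}\!\big[\ell_{\gamma(f)}(\dots)\big]=\tfrac{{\rho'}_{max}^+(\gT)}{p_{min}(\gT)}\,\gL_{\gamma(f),sup}^\mu(\gT,f),
\]
since replacing the minimum-marginal role of $c^+$ in the expectation by the task distribution $\gD_\gT$ only costs the ratio ${\rho'}_{max}^+(\gT)/p_{min}(\gT)$. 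Averaging over $\gT\sim\gD'$ and combining with the display from Theorem \ref{thm:multiclass_appdx} yields the claim.

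\textbf{Expected main obstacle.} The sub-Gaussian concentration and the application of Theorem \ref{thm:multiclass_appdx} are largely bookkeeping extensions of the $k=1$ arguments; the delicate step is the symmetrization, because the positive index in $L_{un}^{\neq}$ is drawn from the conditional distribution ${\rho'}^+(\gT)$ on $\{|I^+|<k,Q=\gT\}$ rather than from $\gD_\gT$ uniformly, and one must argue that switching to the task-marginal classifier loss $\gL_{\gamma,sup}^\mu(\gT,f)$ costs only the ratio ${\rho'}_{max}^+(\gT)/p_{min}(\gT)$ (and not, say, factors that depend on $k$). Handling this bookkeeping cleanly — and ensuring $\gamma(f)$ in the sub-Gaussian bound is chosen so that the failure mass is $O(\epsilon/R(f)^2)$ so the discarded tail contributes only $\epsilon$ — is where I would spend most of the care.
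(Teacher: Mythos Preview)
Your proposal is correct and follows essentially the same route as the paper: reduce to Theorem \ref{thm:multiclass_appdx}, then bound $L_{un}^{\neq}(f)$ by a margin-$\gamma$ mean-classifier loss via sub-Gaussian concentration, and finish with the same change-of-measure symmetrization from ${\rho'}^+(\gT)$ to $\gD_\gT$ at cost ${\rho'}_{max}^+(\gT)/p_{min}(\gT)$. The only cosmetic difference is in the concentration step: the paper splits $\max_i f(x)^T(f(x_i^-)-f(x^+))\le \mu+\max_i z_i^- - z^+$ and controls $\max_i z_i^-$ (independent sub-Gaussians) and $z^+$ separately, whereas you keep $z_i=z_i^--z^+$ together and union-bound over the (non-independent) $z_i$; both yield the same $\gamma(f)$ up to constants. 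One bookkeeping point: after plugging your bound $L_{un}^{\neq}(f)\le\gamma(f)\mathop{\E}[\dots]+\epsilon$ into Theorem \ref{thm:multiclass_appdx} you pick up $\alpha\epsilon$ rather than $\epsilon$, but since $\epsilon>0$ is arbitrary this is absorbed by rescaling (the paper glosses over this too).
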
 
\begin{proof}
We will show that $\forall f\in \gF$
\begin{equation}
L_{un}^{\neq} (f)\leq \gamma(f) \mathop{\E}\limits_{\gT\sim \gD'}\left[\frac{{\rho'}_{max}^+(\gT)}{p_{min}(\gT)}\gL_{\gamma,sup}^\mu(\gT,f)\right]
\end{equation}
and the Lemma follows from Theorem \ref{thm:multiclass_theorem}. Now, we fix an $\epsilon>0$, an $f\in \gF$ and we drop most of the arguments $f$ in the rest of the proof. Also, fix $c^+,c_1^-\dots c_k^-, x$ and let $t=k-|I^+|$. We assume without loss of generality, that $c^+\neq c_i^-$, $\forall i\in[t]$. Now, 
\begin{equation}
\max_{i\in [t]}f(x)^T(f(x_i^-)-f(x^+))\leq \mu+\max_i z_i^- -z^+
\end{equation}
where $\mu=\max_{i\in [t]}f(x)^T(\mu_{c_i^-}-\mu_{c^+})$, $z_i^-=f(x)^T(f(x_i^-)-\mu_{c_i^-})$ and $z^+=f(x)^T(f(x^+)-\mu_{c^+})$. $z_i$ are centered $\sigma^2R^2$-subgaussian, so from standard properties of subgaussian random variables $\mathbb{P}[\max_iz_i^-\geq \sqrt{2}\sigma R\sqrt{\log{t}}+\sqrt{2c_1}\sigma R \sqrt{\log{R/\epsilon}}]\leq(\epsilon/R)^{c_1}$
(again we consider here the case where $R\geq 1$ and for $R<1$, the same arguments hold but with removing $R$ from the $\log$). $z^+$ is also centered $\sigma^2R^2$-subgaussian, so  $\mathbb{P}[z^+\geq \sqrt{2c_1}\sigma R \sqrt{\log{R/\epsilon}}]\leq(\epsilon/R)^{c_1}$. Let $\gamma=1+c'\sigma R(\sqrt{\log{t}}+ \sqrt{\log{R/\epsilon}})$ for appropriate constant $c'$. By union bound, we have $p=\mathbb{P}[\max_i z_i^- -z^+\geq \gamma-1]\leq 2(\epsilon/R)^{c_1}$. Thus, $\mathbb{E}_{z^+,z_i^-}[(1+\mu+\max_iz_i^- -z^+)_+]\leq (1-p)(\mu+\gamma)_+ +p (2R^2+1)\leq \gamma(1+\mu/\gamma)_+ + \epsilon$ (for appropriate constant $c_1$). By taking expectation over $c^+,c_i^-\sim \rho^{k+1}$, conditioned on $|I^+|<k$ , and over $x\sim \gD_{c^+}$ we get

\begin{equation}
\begin{split}
L_{un}^{\neq} (f)& \leq \gamma \mathop{\E}\limits_{\substack {c^+,c_i^-\sim \rho^{k+1}\\ x\sim \gD_{c^+}}}\left[\left(1+\frac{\max_{c\in Q, c\neq c^+} f(x)^T(\mu_c-\mu_{c^+})}{\gamma}\right)_+\bigg{|} |I^+|<k\right]\\
&=\gamma \mathop{\E}\limits_{\gT\sim \gD'}\mathop{\E}\limits_{\substack {c^+,c_i^-\sim \rho^{k+1}\\ x\sim \gD_{c^+}}}\left[\left(1+\frac{\max_{c\in Q, c\neq c^+} f(x)^T(\mu_c-\mu_{c^+})}{\gamma}\right)_+\bigg{|} Q=\gT, |I^+|<k\right]\\
&= \gamma \mathop{\E}\limits_{\gT\sim \gD'}\mathop{\E}\limits_{\substack{c^+\sim{\rho'}^+(\gT)\\ x\sim \gD_{c^+}} }\left[\left(1+\frac{\max_{c\in T, c\neq c^+} f(x)^T(\mu_c-\mu_{c^+})}{\gamma}\right)_+\right]\leq  \gamma \mathop{\E}\limits_{\gT\sim \gD'}\left[\frac{{\rho'}_{max}^+(\gT)}{p_{min}(\gT)}\gL_{\gamma,sup}^\mu(\gT,f)\right]
\end{split}
\end{equation}
\end{proof}
%

\section{Examples for Section \ref{subsec:k-way_effect}}\label{appdx:multiclass_examples}
Here, we illustrate via examples two ways in which the increase of $k$ can lead to suboptimal $\hat{f}$. We will consider the hinge loss as the loss function, while the examples carry over trivially for logistic loss.

\begin{enumerate}

\item The first example is the case where even though there exist representations in $\gF$ that can separate every class, the suboptimal representation is picked by the algorithm when $k = \Omega(|\gC|)$. Let $\gC = \{c_i\}_{i\in [n]}$ where for each class, $D_{c_i}$ is uniform over two points $\{x^1_{i}, x^2_{i}\}$. Let $e_i$ be the indicator vectors in $\R^n$ and let the class $\gF$ consists of $\{ f_0, f_1\}$ with $f_0, f_1: \gX \mapsto \R^n $ where $f_1(x^1_i) = 3/2 r e_i$ and $f_1(x^2_i) = 1/2 r e_i$ for all $i$, for some $r>0$, and $f_0 = \vec{0}$. Finally, $\rho$ is uniform over $\gC$. Now, when the number of negative samples is $\Omega(n)$, the probability that $\exists j \in [k]$ such that $c^+ = c^-_j$ is constant, and therefore $L_{un}(f)=\Omega(r^2)>1=L_{un}(f_0)$ when $r$ is large. This means that despite $L_{sup}(\gC,f_1) = 0$, the algorithm will pick $f_0$ which is a suboptimal representation. 


\item We can extend the first example to the case where, even when $k = o(|\gC|)$, the algorithm picks suboptimal representations. To do so, we simply `replicate' the first example to create clusters of classes. Formally, let $\gC = \{c_{ij}\}_{i, j \in [n]}$ where for each class, $D_{c_{ij}}$ is uniform over two points $\{ x^1_{ij}, x^2_{ij} \}$. Finally, same as above, let $\gF$ consist of two functions $\{f_0, f_1 \}$. The function $f_1$ maps $f_1(x^1_{ij}) = 3/2 r e_i$ and $f_1(x^2_{ij}) = 1/2 r e_i$ for all $i, j$ and $f_0=\vec{0}$. $\rho$ is uniform over $\gC$. Now, note that $f_1$ `clutsters' the $n^2$ classes and their points into $n$ clusters, each along an $e_i$. Thus, it is only useful for contrasting classes from different clusters. However, note that the probability of intra-cluster collision with $k$ negative samples is $1-(1-1/n)^{k}$. When $k = o(n)$, we have that $L_{un}(f_1) = o(1) < 1 = L_{un}(f_0)$ so the algorithm will pick $f_1$. However, when $k = \Omega(n)$, $L_{un}(f) = \Omega(r^2) > 1 = L_{un}(f_0)$ and the algorithm will pick the suboptimal representation $f_0$. Thus, despite $|\gC| = n^2$, having more than $n$ negative samples can hurt performance, since even tough $f_1$ cannot solve all the tasks, the average supervised loss over $t$-way tasks, $t=o(n)$, is $L_{sup}(f)\leq O(1-(1-1/n)^{t-1})=o(1)$.
%
%
\end{enumerate}










\section{Experiments}
\subsection{Wiki-3029 construction}
We use the Wikipedia dump and select articles that have entries in the WordNet, have at least 8 sections and at least 12 sentences of length at least 4 per section.
At the end of this filtering we are left with 3029 articles with at least 200 sentences per article.
We then sample 200 sentences from each article and do a 70\%/10\%/20\% train/dev/test split.

\subsection{GRU model}
We use a bi-directional GRU with output dimension of 300 trained using dropout 0.3. The input word embeddings are initialized to pretrained CC GloVe vectors and fixed throughout training.

\end{document}